\theoremstyle{plain}
\theoremstyle{definition}
\theoremstyle{remark}
\newcommand{\lang}[0]{SGLang\xspace}
\newif\ifcomments
    \newcommand{\lianmin}[1]{{\color{blue}{\bf\sf [Lianmin: #1]}}}
    \newcommand{\ying}[1]{{\color{blue}{\bf\sf [Ying: #1]}}}
    \newcommand{\livia}[1]{{\color{blue}{\bf\sf [Livia: #1]}}}
    \newcommand{\zhiqiang}[1]{{\color{blue}{\bf\sf [Zhiqiang: #1]}}}
    \newcommand{\clark}[1]{{\color{blue}{\bf\sf [Clark: #1]}}}
    \newcommand{\ion}[1]{{\color{orange}{\bf\sf [Ion: #1]}}}
    \newcommand{\joey}[1]{{\color{blue}{\bf\sf [Joey: #1]}}}
    \newcommand{\todo}[1]{{\color{orange}{\bf\sf [TODO: #1]}}}
    \newcommand{\fixme}[1]{{\color{orange}{#1}}}
    \newcommand{\lianmin}[1]{}
    \newcommand{\ying}[1]{}
    \newcommand{\livia}[1]{}
    \newcommand{\clark}[1]{}
    \newcommand{\zhiqiang}[1]{}
    \newcommand{\ion}[1]{}
    \newcommand{\joey}[1]{}
    \newcommand{\todo}[1]{}
    \newcommand{\fixme}[1]{}
\title{SGLang: Efficient Execution of \\ Structured Language Model Programs}
\author{
Lianmin Zheng$^2$\thanks{Equal contribution.} $\quad$ Liangsheng Yin$^3$ $\quad$ Zhiqiang Xie$^1$ $\quad$ Chuyue Sun$^1$ $\quad$ Jeff Huang$^4$ \\
\textbf{Cody Hao Yu}$^5$ $\quad$ \textbf{Shiyi Cao}$^2$ $\quad$ \textbf{Christos Kozyrakis}$^1$ $\quad$ \textbf{Ion Stoica}$^2$ $\quad$ \textbf{Joseph E. Gonzalez}$^2$ \\
\textbf{Clark Barrett}$^1$ $\quad$ \textbf{Ying Sheng}$^{1*}$ \\\\
$^1$ Stanford University \quad $^2$ UC Berkeley \quad $^3$ Shanghai Jiao Tong University \\
\quad $^4$ Texas A\&M University \quad $^5$ Independent Researcher
}
\begin{document}

\maketitle

\begin{abstract}
Large language models (LLMs) are increasingly used for complex tasks that require multiple generation calls, advanced prompting techniques, control flow, and structured inputs/outputs. However, efficient systems are lacking for programming and executing these applications. We introduce \lang, a system for efficient execution of complex language model programs. \lang consists of a frontend language and a runtime. The frontend simplifies programming with primitives for generation and parallelism control. The runtime accelerates execution with novel optimizations like RadixAttention for KV cache reuse and compressed finite state machines for faster structured output decoding. Experiments show that \lang achieves up to $6.4\times$ higher throughput compared to state-of-the-art inference systems on various large language and multi-modal models on tasks including agent control, logical reasoning, few-shot learning benchmarks, JSON decoding, retrieval-augmented generation pipelines, and multi-turn chat. The code is publicly available at \url{https://github.com/sgl-project/sglang}.
\end{abstract}

\section{Introduction}
\label{sec:intro}

Recent increases in the capabilities of LLMs have broadened their utility, enabling them to tackle a wider range of general tasks and act as autonomous agents~\cite{openai2023gpt4,bubeck2023sparks,Park2023GenerativeAgents,wang2023voyager,sumers2023cognitive}.
In such applications, LLMs engage in multi-round planning, reasoning, and interaction with external environments.
This is accomplished through tool usage~\cite{schick2023toolformer,patil2023gorilla}, multiple input modalities~\cite{team2023gemini,alayrac2022flamingo}, and a wide range of prompting techniques~\cite{liu2023prompting}, like few-shot learning~\cite{brown2020language},
self-consistency~\cite{wang2022self}, skeleton-of-thought~\cite{ning2024skeletonofthought}, and tree-of-thought~\cite{yao2023tree}.
All of these new use cases require multiple, often dependent, LLM generation calls, showing a trend of using multi-call structures to complete complex tasks~\cite{yao2022react,kim2023llm}.

The emergence of these patterns signifies a shift in our interaction with LLMs, moving from simple chatting to a more sophisticated form of programmatic usage of LLMs, which means using a program to schedule and control the generation processes of LLMs. We refer to these programs as "Language Model Programs" (LM Programs)~\cite{beurer2023lmql,khattab2023dspy}.
The advanced prompting techniques and agentic workflow mentioned above fall within the scope of LM programs.
There are two common properties of LM programs:
(1) LM programs typically contain multiple LLM calls interspersed with control flow.  This is needed to complete complex tasks and improve overall quality.
(2) LM programs receive structured inputs and produce structured outputs.  This is needed to enable the composition of LM programs and to integrate LM programs into existing software systems.

Despite the widespread use of LM programs, current systems for expressing and executing them remain inefficient. We identify two primary challenges associated with the efficient use of LM programs:
\textit{First, programming LM programs is tedious and difficult due to the non-deterministic nature of LLMs.} 
Developing an LM program often requires extensive string manipulation, experimental tuning of prompts, brittle output parsing, 
handling multiple input modalities,
and implementing parallelism mechanisms. 
This complexity significantly reduces the readability of even simple programs (\autoref{sec:programming_model}).

\textit{Secondly and importantly, executing LM programs is inefficient due to redundant computation and memory usage}. 
State-of-the-art inference engines (e.g., vLLM~\cite{kwon2023vllm}, TGI~\cite{tgi}, and TensorRT-LLM~\cite{nvidia_tensorrt_llm}), have been optimized to reduce latency and improve throughput without direct knowledge of the workload.  
This makes these systems general and robust but also results in significant inefficiencies for any given workload.
A prominent example is the reuse of the Key-Value (KV) cache (\autoref{sec:radix_attention}). The KV cache consists of reusable intermediate tensors that are essential for generative inference. 
During typical batch executions of LM programs, numerous opportunities exist to reuse the KV cache across multiple different LLM calls that share a common prefix. 
However, current systems lack effective mechanisms to facilitate this reuse, resulting in unnecessary computations and wasted memory.
Another example is constrained decoding for structured outputs (e.g., JSON mode), where the output of LLMs is restricted to follow specific grammatical rules defined by a regular expression (\autoref{sec:compressed-fsm}). 
Under these constraints, multiple tokens can often be decoded once. However, existing systems only decode one token at a time, leading to suboptimal decoding speeds.

\begin{figure}[t]
\centering
\vspace{-2.5em}
\includegraphics[width=0.95\linewidth]{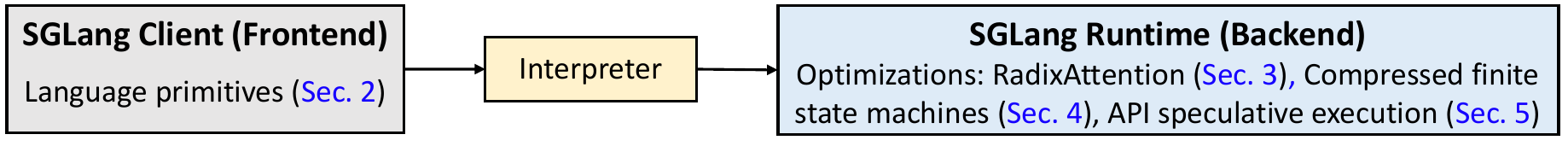}
\vspace{-0.5em}
\caption{System architecture: An interpreter executes language primitives with optimized runtime.}
\vspace{-1.5em}
\label{fig:architecture}
\end{figure}

To address these challenges, we present \lang, a \underline{S}tructured \underline{G}eneration \underline{Lang}uage for LLMs.
The core idea is to systematically exploit the multi-call structure in LM programs for efficient execution.
As shown in \autoref{fig:architecture}, it has two parts: a front-end language and a back-end runtime.
The front-end simplifies the programming of LM programs, and the runtime accelerates their execution.
The two parts can work together for better performance but can also function independently.

We introduce \lang as a domain-specific language embedded in Python. It provides primitives for generation (e.g., \texttt{extend}, \texttt{gen}, \texttt{select}) and parallelism control (e.g., \texttt{fork}, \texttt{join}). \lang is compatible with Python's control flow and libraries, so users can develop advanced prompting workflows easily with native Python syntax. We provide an interpreter and a compiler for \lang. The interpreter manages the prompt state as a stream and submits primitive operations to the stream for asynchronous execution, ensuring proper control over synchronization and intra-program parallelism.
Additionally, \lang program can be traced and compiled for more optimizations.

On the runtime side, we propose several novel optimizations to accelerate the execution of \lang programs. \textit{The first technique}, RadixAttention, enables the automatic reuse of the KV cache across multiple generation calls. In existing inference engines, the KV cache of a request is discarded after processing is completed, preventing the KV cache from being reused across multiple calls and significantly slowing down the execution. Instead, our system maintains an LRU cache of the KV cache for all requests within a radix tree. This approach manages the KV cache as a traditional cache and uses a radix tree for efficient matching, insertion, and eviction. It allows the runtime to handle various reuse patterns with a cache-aware scheduling policy efficiently.
\textit{The second technique} is a compressed finite state machine, which enables faster constrained decoding for structured outputs. Existing systems follow the constraints only for the next token by masking probabilities of disallowed tokens, making them able to decode only one token at a time. Instead, our system analyzes the constraints and builds a compressed finite-state machine to represent the constraint. This approach compresses a multi-token path into a single-step path whenever possible, allowing the decoding of multiple tokens at once to achieve faster decoding speed.
Lastly, \lang also supports API-only models like OpenAI's GPT-4, and we introduce \textit{the third technique}, API speculative execution, to optimize multi-call programs for API-only models.

Using \lang, we implemented various LLM applications, including agent control, logical reasoning, few-shot learning benchmarks, JSON decoding, retrieval-augmented generation pipelines, multi-turn chat, and multi-modality processing. We tested the performance on models including Llama-7B/70B~\cite{touvron2023llama2}, Mistral-8x7B~\cite{jiang2024mixtral}, LLaVA-v1.5-7B (image)~\cite{liu2024llavanext}, and LLaVA-NeXT-34B (video)~\cite{zhang2024llavanextvideo} on NVIDIA A10G and A100 GPUs. Experimental results show that \lang achieves up to $6.4\times$ higher throughput across a wide range of workloads, models, and hardware setups, compared to existing programming and inference systems, including Guidance~\cite{guidance}, vLLM~\cite{kwon2023vllm}, and LMQL~\cite{beurer2023lmql}.

\vspace{-1em}
\section{Programming Model}
\vspace{-0.4em}

\label{sec:programming_model}
This section introduces the \lang programming model with a running example, describes its language primitives and execution modes, and outlines runtime optimization opportunities.
This programming model can simplify tedious operations in multi-call workflows (e.g., string manipulation, API calling, constraint specification, parallelism) by providing flexible and composable primitives.

\begin{figure}[t]
\centering
\vspace{-2.5em}
\includegraphics[width=0.97\linewidth]{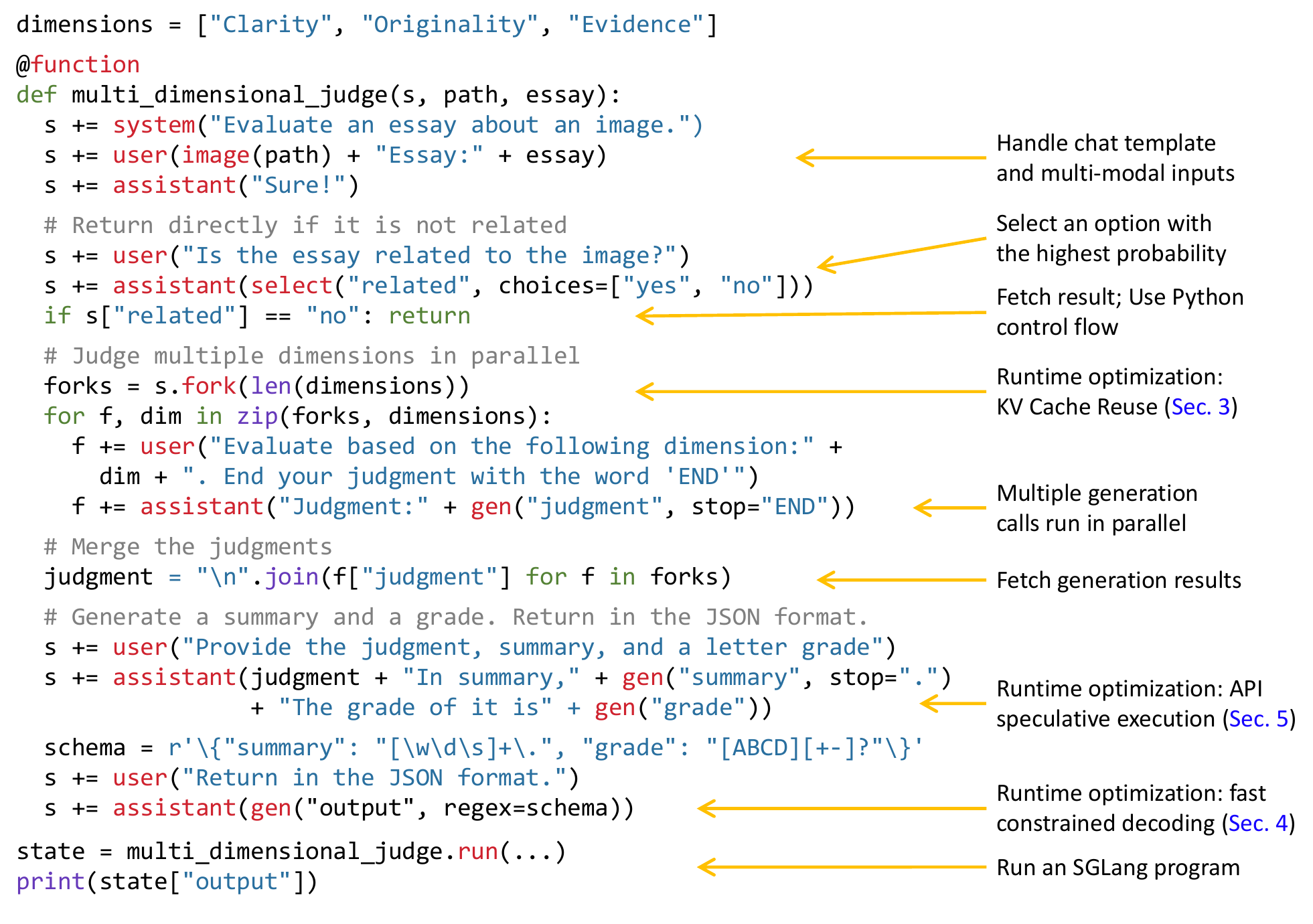}
\vspace{-1em}
\caption{\small
The implementation of a multi-dimensional essay judge in \lang utilizes the branch-solve-merge prompting technique~\cite{saha2023branch}. Primitives provided by \lang are shown in red.}
\vspace{-1em}
\label{fig:example_first_program}
\end{figure}

\textbf{A running example.} 
The language is a domain-specific language embedded in Python. 
\autoref{fig:example_first_program} shows a program that evaluates an essay about an image using the branch-solve-merge prompting method~\cite{saha2023branch}.
The function \texttt{multi\_dimensional\_judge} takes three arguments: \texttt{s}, \texttt{path}, and \texttt{essay}.
\texttt{s} manages the prompt state, \texttt{path} is the image file path, and \texttt{essay} is the essay text.
New strings and \lang primitives can be appended to the state \texttt{s} for execution using the \texttt{+=} operator.
First, the function adds the image and essay to the prompt.
It then checks if the essay is related to the image using \texttt{select}, storing the result in \texttt{s["related"]}.
If related, the prompt is forked into three copies for parallel evaluation from different dimensions, using \texttt{gen} to store results in \texttt{f["judgment"]}.
Next, it merges the judgments, generates a summary, and assigns a letter grade.
Finally, it returns the results in JSON format, following a schema defined by a regular expression constraint \texttt{regex}.
\lang greatly simplifies this program, as an equivalent program using an OpenAI API-like interface would take $2.1\times$ as many lines of code due to manual string manipulation and parallelism control.

\textbf{Language primitives.}
\lang provides primitives for controlling prompt state, generation, and parallelism. They can be used together with Python syntax and libraries.
Here are the primitives:
``\texttt{gen}'' calls a model to generate and stores the results in a variable with the name specified in its first argument. It supports a ``\texttt{regex}'' argument to constrain the output to follow a grammar defined by a regular expression (e.g., a JSON schema).
``\texttt{select}'' calls a model to choose the highest probability option from a list.
The operator ``\texttt{+=}'' or ``\texttt{extend}'' appends a string to the prompt.
The operator ``\texttt{[variable\_name]}'' fetches the results of a generation.
``\texttt{fork}'' creates parallel forks of the prompt state.
``\texttt{join}'' rejoins the prompt state.
``\texttt{image}'' and ``\texttt{video}'' take in image and video inputs.

\textbf{Execution modes.}
The simplest way to execute an \lang program is through an interpreter, where a prompt is treated as an asynchronous stream. Primitives like \texttt{extend}, \texttt{gen}, and \texttt{select} are submitted to the stream for asynchronous execution. These non-blocking calls allow Python code to continue running without waiting for the generation to finish. This is similar to launching CUDA kernels asynchronously. Each prompt is managed by a stream executor in a background thread, enabling intra-program parallelism. Fetching generation results will block until they are ready, ensuring correct synchronization.
Alternatively, \lang programs can be compiled as computational graphs and executed with a graph executor, allowing for more optimizations. This paper uses interpreter mode by default and discusses compiler mode results in \autoref{sec:compiler_mode}.
\lang supports open-weight models with its own \lang Runtime (SRT), as well as API models such as OpenAI and Anthropic models.

\textbf{Comparison.}
Programming systems for LLMs can be classified as high-level (e.g., LangChain, DSPy) and low-level (e.g., LMQL, Guidance, \lang).
High-level systems provide predefined or auto-generated prompts, such as DSPy's prompt optimizer.
Low-level systems typically do not alter prompts but allow direct manipulation of prompts and primitives.
\lang is a low-level system similar to LMQL and Guidance. \autoref{tab:diff_lang} compares their features.
\lang focuses more on runtime efficiency and comes with its own co-designed runtime, allowing for novel optimizations introduced later.
High-level languages (e.g., DSPy) can be compiled to low-level languages (e.g., \lang).
We demonstrate the integration of \lang as a backend in DSPy for better runtime efficiency in \autoref{sec:eval}.

\textbf{Runtime optimizations.}
\autoref{fig:example_first_program} shows three runtime optimization opportunities: KV cache reuse, fast constrained decoding, API speculative execution. We will discuss them in the following sections.

\begin{table*}[t]
\footnotesize
    \centering
     \vspace{-2.5em}
    \caption{\small Comparison among LMQL, Guidance, and \lang.}
     \vspace{-0.9em}
    \resizebox{0.9\columnwidth}{!}{
    \begin{tabular}{llll}
    \toprule
    System & Syntax & Language Primitives & Runtime Backends \\
    \midrule
    LMQL     & Custom & extend, gen, select        & HF Transformers, llama.cpp, OpenAI \\
    Guidance & Python & extend, gen, select, image & HF Transformers, llama.cpp, OpenAI \\
    \midrule
    \lang    & Python & extend, gen, select, image, video, fork, join &  \lang Runtime (SRT), OpenAI\\
    \bottomrule
    \end{tabular}
    }
    \vspace{-2em}
    \label{tab:diff_lang}
\end{table*}

\section{Efficient KV Cache Reuse with RadixAttention}
\label{sec:radix_attention}

\lang programs can chain multiple generation calls and create parallel copies with the "\texttt{fork}" primitive.
Additionally, different program instances often share some common parts (e.g., system prompts). These scenarios create many shared prompt prefixes during execution, leading to numerous opportunities for reusing the KV cache.
During LLM inference, the KV cache stores intermediate tensors from the forward pass, reused for decoding future tokens. They are named after key-value pairs in the self-attention mechanism~\cite{vaswani2017attention}. KV cache computation depends only on prefix tokens. Therefore, requests with the same prompt prefix can reuse the KV cache, reducing redundant computation and memory usage. More background and some examples are provided in \autoref{sec:additional_radix_attention}.

Given the KV cache reuse opportunity, a key challenge in optimizing \lang programs is reusing the KV cache across multiple calls and instances. 
While some systems explore certain KV cache reuse cases~\cite{kwon2023vllm,ye2024chunkattention,juravsky2024hydragen,gim2023prompt}, they often need manual configurations and cannot handle all reuse patterns (e.g., dynamic tree structures). Consequently, most state-of-the-art inference systems recompute the KV cache for each request. We will discuss their limitations and our differences in \autoref{sec:related-work}.

This section introduces RadixAttention, a novel technique for automatic and systematic KV cache reuse during runtime. Unlike existing systems that discard the KV cache after a generation request finishes, our system retains the cache for prompts and generation results in a radix tree, enabling efficient prefix search, reuse, insertion, and eviction. We implement an LRU eviction policy and a cache-aware scheduling policy to enhance the cache hit rate. RadixAttention is compatible with techniques like continuous batching~\cite{yu2022orca}, paged attention~\cite{kwon2023vllm}, and tensor parallelism~\cite{shoeybi2019megatron}.
In addition, it introduces only negligible memory and time overhead when there is no cache hit.

\textbf{RadixAttention.}
A radix tree is a data structure that serves as a space-efficient alternative to a classical trie (prefix tree). Unlike typical trees, the edges of a radix tree can be labeled not just with single elements but also with sequences of elements of varying lengths, significantly enhancing efficiency.
In our system, we utilize a radix tree to manage a mapping between sequences of tokens, and their corresponding KV cache tensors. These KV cache tensors are stored in a non-contiguous, paged layout, where the size of each page is equivalent to one token.
Because GPU memory is quickly filled by the KV cahce, we introduce a simple LRU eviction policy that evicts the least recently used leaf first.  By evicting leaves first, we enable the re-use of their common ancestors until those ancestors become leaves and are also evicted.

In the continuous batching setting, we cannot evict nodes used by the currently running batch. Therefore, each node maintains a reference counter indicating how many running requests are using it. A node is evictable if its reference counter is zero. Note that we do not preallocate a fixed-size memory pool as a cache. Instead, we let the cached tokens and the currently running requests share the same memory pool. Therefore, the system dynamically allocates memory for cache and running requests. When enough waiting requests run, the system will evict all cached tokens in favor of a larger batch size.
\autoref{fig:example_radix_attn} shows how the radix tree is maintained for several incoming requests. 
The frontend interpreter sends full prompts to the runtime, and the runtime performs prefix matching and reuse. 
The tree structure is stored on the CPU with negligible maintenance overhead. 
During the execution of the \texttt{fork} primitive, the frontend sends the prefix first as a hint, ensuring the prefix is correctly inserted into the tree. It then sends the remaining prompts. This "Frontend Hint" simplifies runtime scheduling and matching, exemplifying the benefits of frontend-runtime co-design.

\begin{figure}[t]
\centering
\vspace{-2.5em}
\includegraphics[width=0.99\linewidth]{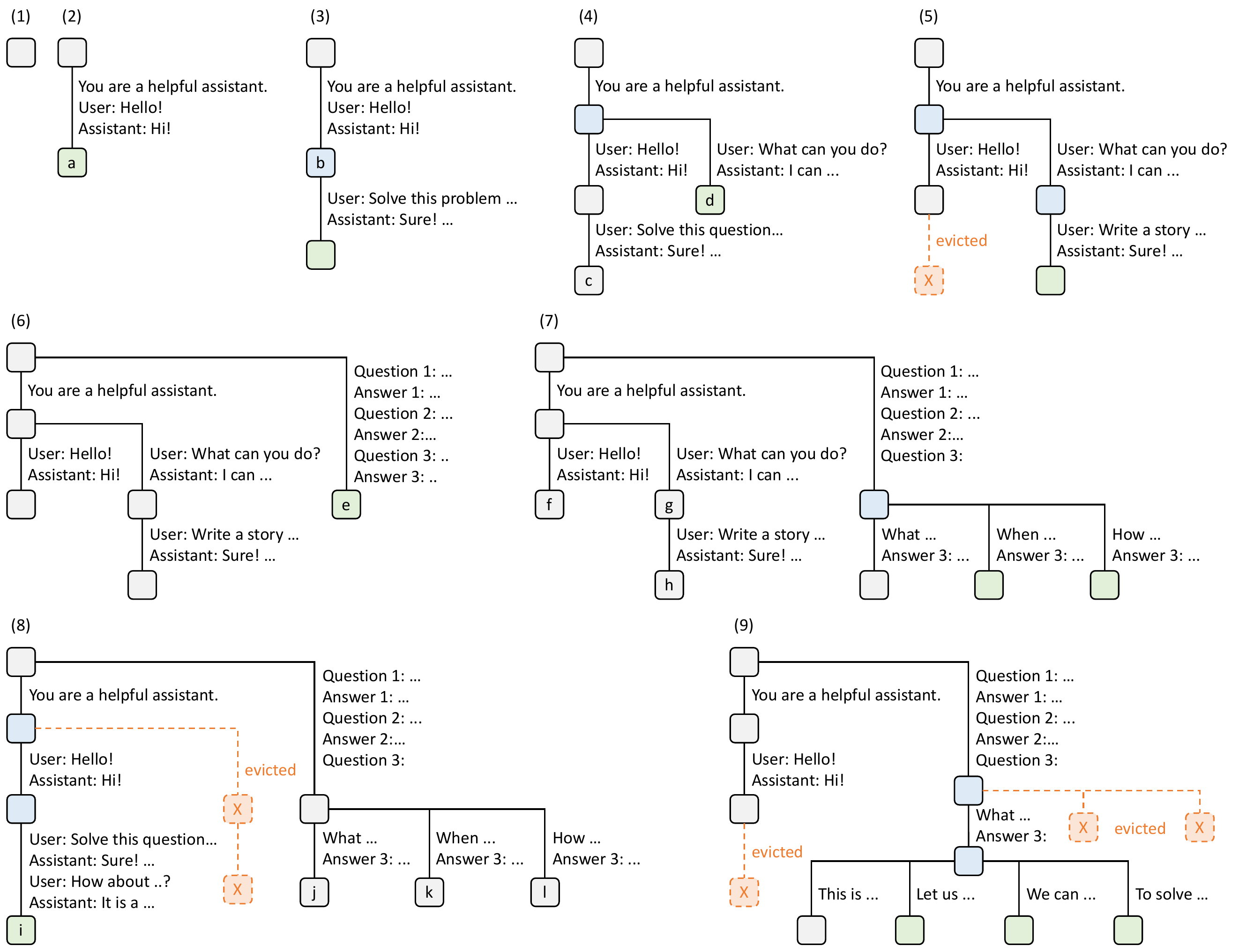}
\vspace{-0.5em}
\caption{
Examples of RadixAttention operations with an LRU eviction policy, illustrated across nine time points.
The figure demonstrates the dynamic evolution of the radix tree in response to various requests. These requests include two chat sessions, a batch of few-shot learning inquiries, and a self-consistency sampling.
Each tree edge carries a label denoting a substring or a sequence of tokens. The nodes are color-coded to reflect different states: green for newly added nodes, blue for cached nodes accessed during the time point, and red for nodes that have been evicted.
In step (1), the radix tree is initially empty.
In step (2), the server processes an incoming user message "Hello" and responds with the LLM output "Hi". The system prompt "You are a helpful assistant", the user message "Hello!", and the LLM reply "Hi!" are consolidated into the tree as a single edge linked to a new node.
In step (3), a new prompt arrives and the server finds the prefix of the prompt (i.e., the first turn of the conversation) in the radix tree and reuses its KV cache. The new turn is appended to the tree as a new node.
In step (4), a new chat session begins. The node ``b'' from (3) is split into two nodes to allow the two chat sessions to share the system prompt.
In step (5), the second chat session continues. However, due to the memory limit, node "c" from (4) must be evicted. The new turn is appended after node "d" in (4).
In step (6), the server receives a few-shot learning query, processes it, and inserts it into the tree. The root node is split because the new query does not share any prefix with existing nodes.
In step (7), the server receives a batch of additional few-shot learning queries. These queries share the same set of few-shot examples, so we split node 'e' from (6) to enable sharing.
In step (8), the server receives a new message from the first chat session. It evicts all nodes from the second chat session (node "g" and "h") as they are least recently used.
In step (9), the server receives a request to sample more answers for the questions in node "j" from (8), likely for self-consistency prompting. To make space for these requests, we evict node "i", "k", and "l" in (8).
}
\vspace{-1em}
\label{fig:example_radix_attn}
\end{figure}

\textbf{Cache-aware scheduling.}
We define the cache hit rate as $\frac{\text{number of cached prompt tokens}}{\text{number of prompt tokens}}$. 
When there are many requests in the waiting queue, the order in which they are executed can significantly impact the cache hit rate. For example, if the request scheduler frequently switches between different, unrelated requests, it can lead to cache thrashing and a low hit rate.
We design a cache-aware scheduling algorithm to increase the cache hit rate. 
In the batch-processing setting we
sort the requests by matched prefix length and prioritize requests with longer matched prefixes instead of using a first-come, first-served schedule. \autoref{alg:cache_aware_scheduling} (Appendix) shows the pseudo-code for cache-aware scheduling with contiguous batching. The algorithm uses longest-shared-prefix-first order.
In more latency-sensitive settings we may still be able to tolerate limited batch re-ordering to improve cache reuse. 
Additionally, we prove the following theorem for optimal scheduling in the offline case.
\footnote{In practice, the computation is not the same as what is described in the proof of \autoref{thm:optimal} because the unpredictable number of output tokens can cause the recomputation of the KV cache.}

\vspace{-0.2em}
\begin{restatable}{theorem}{optimal}
\label{thm:optimal}
For a batch of requests, we can achieve an optimal cache hit rate by visiting the radix tree of the requests in the depth-first search order, with a cache size $\geq$ the maximum request length. The longest-shared-prefix-first order is equivalent to a depth-first search order.
\end{restatable}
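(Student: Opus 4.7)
The plan is to prove the theorem in three stages: derive a matching upper bound on the total cache hits over the batch, show that depth-first order achieves this bound under the stated cache-size assumption, and check that the longest-shared-prefix-first rule coincides with a DFS traversal. Throughout, let $T$ be the radix tree of the batch, let $|e|$ be the token length of edge $e$, and let $n_e$ be the number of requests whose prompt passes through $e$. For the upper bound I would use the accounting identity $\text{total hits} + \text{total tokens recomputed} = \sum_r |L_r| = \sum_e n_e\,|e|$; since each token on each edge $e$ must be computed from scratch at least once during the batch, the total recomputation is at least $\sum_e |e|$, giving $\text{total hits} \le \sum_e (n_e - 1)\,|e|$.

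Next I would fix a DFS order $L_1,\dots,L_m$ of the leaves and prove by induction that, just before $L_i$ is processed, the cache still contains the entire root-to-$L_{i-1}$ path. In the inductive step, processing $L_i$ consists of reading the shared prefix tokens through $\operatorname{LCA}(L_{i-1},L_i)$, which refreshes their LRU timestamps, followed by writing the new suffix tokens; any LRU evictions then fall on the old tail of $L_{i-1}$ past the LCA rather than on the shared prefix, provided the cache size is at least the maximum prompt length. Thus the cache hits for $L_i$ are exactly $\operatorname{depth}(\operatorname{LCA}(L_{i-1},L_i))$. Because the $n_e$ leaves under any edge $e$ are consecutive in DFS order, $e$ contributes $|e|$ to precisely $n_e - 1$ of these LCA depths, so $\sum_{i \ge 2}\operatorname{depth}(\operatorname{LCA}(L_{i-1},L_i)) = \sum_e (n_e - 1)\,|e|$, matching the upper bound.

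For the third stage I would show that longest-shared-prefix-first realizes a DFS. After processing $L_1,\dots,L_i$, the unprocessed leaf $L$ that maximizes the LCP with any previously processed leaf is the one whose LCA with $L_i$ is deepest: for any earlier $L_j$ with $j<i$, each ancestor of $L_j$ that is not also an ancestor of $L_i$ has had its entire subtree already visited in DFS, so any yet-unvisited $L$ sharing a strictly deeper ancestor with $L_j$ would necessarily share that ancestor with $L_i$ as well. Hence the greedy rule selects exactly the next DFS leaf at each step.

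The main obstacle I expect is the LRU bookkeeping in the second stage, especially when the cache size equals the maximum prompt length and evictions fire mid-request. The cleanest route I see is to pin down the order of per-token operations---prefix reads strictly before suffix writes---so that the shared prefix is strictly more recent than any write-pressured tokens; once that ordering is fixed, the combinatorial identity in stage two and the equivalence argument in stage three follow directly.
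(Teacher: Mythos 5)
Your proposal is correct and follows essentially the same route as the paper's proof: a per-edge bound (your hit-counting bound $\sum_e (n_e-1)|e|$ is just the complement of the paper's computation lower bound $C \ge \sum_e |e|$), an argument that DFS attains it because the cache-size assumption keeps the shared prefix resident while its subtree is traversed, and an induction showing the greedy longest-shared-prefix rule selects the next DFS leaf. Your version spells out the LRU bookkeeping and the consecutive-leaves/LCA identity more explicitly than the paper does, but the underlying argument is the same.
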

\vspace{-0.2em}

The proof is in \autoref{subsec:schedule_proof} (Appendix).
In the online case, the DFS order will be disrupted, but our schedule still approximates the DFS behavior on the augmented part of the full radix tree, as described in \autoref{subsec:schedule_proof}.
While greedy cache-aware scheduling can achieve high throughput, it can lead to starvation. We leave its integration with other fair scheduling methods~\cite{sheng2023fairness} as future work.

\textbf{Distributed Cases.} RadixAttention can be extended to multiple GPUs. For tensor parallelism, each GPU maintains a sharded KV cache. There is no need for additional synchronization because the tree operations are the same.
Data parallelism with multiple workers is discussed in \autoref{subsec:distributed_radix_attention} (Appendix).

\section{Efficient Constrained Decoding with Compressed Finite State Machine}
\label{sec:compressed-fsm}

In LM programs, users often want to constrain the model's output to follow specific formats, such as JSON schemas. This can improve controllability and robustness, and make the output easier to parse. \lang offers a \texttt{regex} argument to enforce such constraints using regular expressions, which are expressive enough for many practical scenarios.
Existing systems support this by converting a regular expression into a finite state machine (FSM)~\cite{willard2023efficient}. During decoding, they maintain the current FSM state, retrieve allowed tokens from the next states, and set the probability of invalid tokens to zero, decoding token by token. This token-by-token approach, however, is inefficient when there are opportunities to decode multiple tokens at once.
For example, the constant sequence \texttt{\{"summary":\ "} in \autoref{fig:example_first_program} spans multiple tokens in the normal decoding process as shown in \autoref{fig:example_compressed_fsm} (c), requiring multiple decoding stages, even though there is only one valid next token when decoding it. Therefore, the whole sequence can be decoded in a single step (i.e., forward pass).
However, existing systems can only decode one token at a time because the lack of integration between the FSM and the model runner in existing systems prevents multi-token processing, resulting in slow decoding.

\begin{figure}[t]
\centering
\vspace{-2em}
\includegraphics[width=0.99\linewidth]{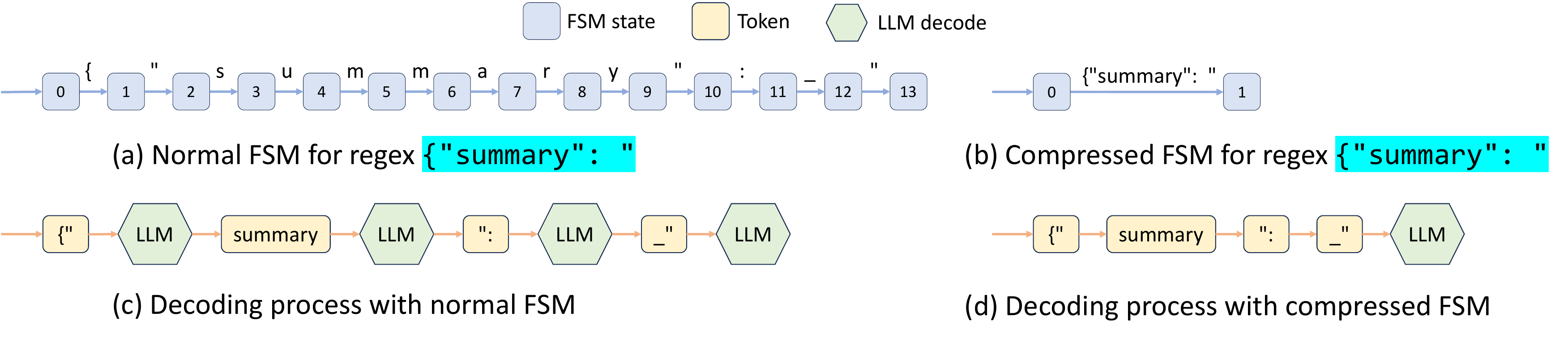}
\vspace{-0.5em}
\caption{The decoding process of normal and compressed FSMs (the underscore \texttt{\_} means a space).}
\vspace{-1.5em}
\label{fig:example_compressed_fsm}
\end{figure}

\lang overcomes this limitation by creating a fast constrained decoding runtime with a compressed FSM. This runtime analyzes the FSM and compresses adjacent singular-transition edges in the FSM into single edges as demonstrated in \autoref{fig:example_compressed_fsm} (b), allowing it to recognize when multiple tokens can be decoded together. In \autoref{fig:example_compressed_fsm} (d), multiple tokens on the compressed transition edge can be decoded in one forward pass, which greatly accelerates the decoding process. It is also general and applicable to all regular expressions. More details on the background and implementation are in \autoref{sec:additional_fsm}.

\section{Efficient Endpoint Calling with API Speculative Execution}
\label{sec:endpoint_speculative}

The previous sections introduced optimizations for open-weight models, which require modifications to the model inference process. Additionally, \lang works with API-access-only models, such as OpenAI's GPT-4. However, for these models, we can only call a black-box API endpoint.

This section introduces a new optimization for black-box API models that accelerates execution and reduces the API cost of multi-call \lang programs using speculative execution. For example, a program may ask the model to generate a description of a character with a multi-call pattern: \texttt{s += context + "name:" + gen("name", stop="\textbackslash n") + "job:" + gen("job", stop="\textbackslash n")}. Naively, the two \texttt{gen} primitives correspond to two API calls, meaning that the user needs to pay for the input token fee on the \texttt{context} twice.
In \lang, we can enable speculative execution on the first call and let it continue the generation of a few more tokens by ignoring the stop condition.
The interpreter keeps the additional generation outputs and matches and reuses them with later primitives.
In certain cases, with careful prompt engineering, the model can correctly match the template with high accuracy, saving us the latency and input costs of one API call.

\section{Evaluation}
\label{sec:eval}

We evaluate the performance of \lang across diverse LLM workloads.
Subsequently, we conduct ablation studies and case studies to demonstrate the effectiveness of specific components.
\lang is implemented in PyTorch~\cite{paszke2019pytorch} with custom CUDA kernels from FlashInfer~\cite{flashinfer} and Triton~\cite{tillet2019triton}.

\subsection{Setup}
\label{subsec:eval-setup}

\noindent \textbf{Models.}
We test dense Llama-2 models~\cite{touvron2023llama2}, sparse mixture of experts Mixtral models~\cite{jiang2024mixtral}, multi-modal LLaVA image~\cite{liu2023improved} and video models~\cite{zhang2024llavanextvideo}, and API model OpenAI's GPT-3.5.
For open-weight models, the number of parameters ranges from 7 billion to 70 billion, and we use float16 precision.

\noindent \textbf{Hardware.}
We run most experiments on AWS EC2 G5 instances, which are equipped with NVIDIA A10G GPUs (24GB). We run 7B models on a single A10G GPU and larger models on multiple A10G GPUs with tensor parallelism~\cite{shoeybi2019megatron}.
We run some additional experiments on A100G (80GB) GPUs. %

\noindent \textbf{Baselines.}
We compare \lang against both high-level programming systems with their respective languages and default runtimes, as well as low-level inference engines with standard OpenAI-like Completion APIs.
Unless otherwise stated, we do not turn on optimizations that will change the computation results so that all systems compute the same results.
The baselines include:
\begin{itemize}
\item Guidance\cite{guidance}, a language for controlling LLMs. We use Guidance v0.1.8 with llama.cpp backend.
\item vLLM~\cite{kwon2023vllm}, a high-throughput inference engine. We use vLLM v0.2.5 and its default API server\footnote{RadixAttention has been partially integrated as an optional experimental feature into the latest version of vLLM; therefore, we used an earlier version for comparison.}.
\item LMQL~\cite{beurer2023lmql}, a query language. We use LMQL v0.7.3 with Hugging Face Transformers backend.
\end{itemize}

\noindent \textbf{Workloads.} We test the following: 5-shot MMLU~\cite{hendrycks2020measuring} and 20-shot HellaSwag~\cite{zellers2019hellaswag} benchmarks. 
We decode one token for MMLU and use primitive \texttt{select} to select the answer with the highest probability for HellaSwag.
For the ReAct agent~\cite{yao2022react} and generative agents~\cite{Park2023GenerativeAgents},  we extract the traces from the original papers and replay them. 
We use the Tree-of-thought~\cite{yao2023tree} for the GSM-8K problems and 
Skeleton-of-thought~\cite{ning2024skeletonofthought} for tip generation.
We use LLM judges with the branch-solve-merge~\cite{saha2023branch} technique; 
JSON decoding with a schema specified by a regular expression;
Multi-turn chat with 4 turns, where the input of each turn is randomly sampled between 256-512 tokens. Multi-turn chat (short) means short output (4-8 tokens) and multi-turn chat (long) means long output (256-512 tokens); DSPy retrieval-augmented generation (RAG) pipeline~\cite{khattab2023dspy} in its official example.

\noindent \textbf{Metrics.}
We report two performance metrics: throughput and latency. For throughput, we run a sufficiently large batch of program instances to compute the maximum throughput, comparing the number of program instances executed per second (programs per second, p/s). For latency, we execute a single program at a time without batching and report the average latency for multiple instances.

\begin{figure}[t]
\centering
\vspace{-2em}
\includegraphics[width=\linewidth]{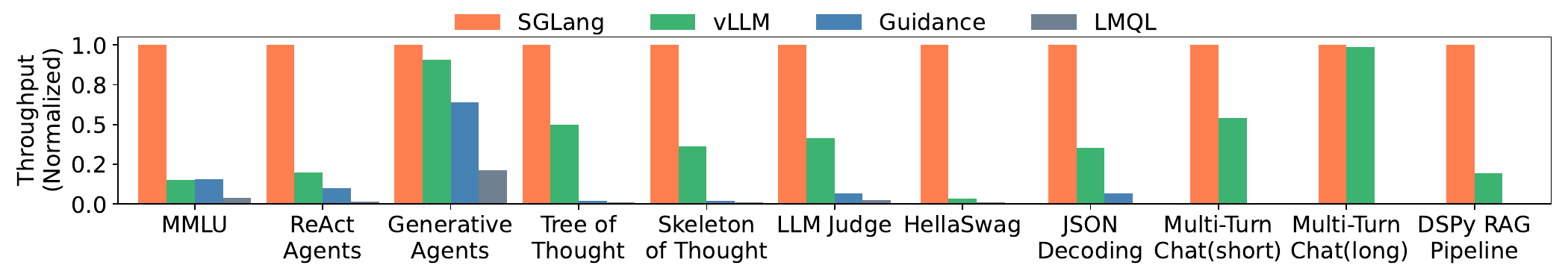}
\vspace{-2.1em}
\caption{Normalized throughput on Llama-7B models. Higher is better.}
\vspace{-2em}
\label{fig:exp_llama_7b_throughput}
\end{figure}

\begin{figure}[t]
\centering
\vspace{-0.5em}
\includegraphics[width=\linewidth]{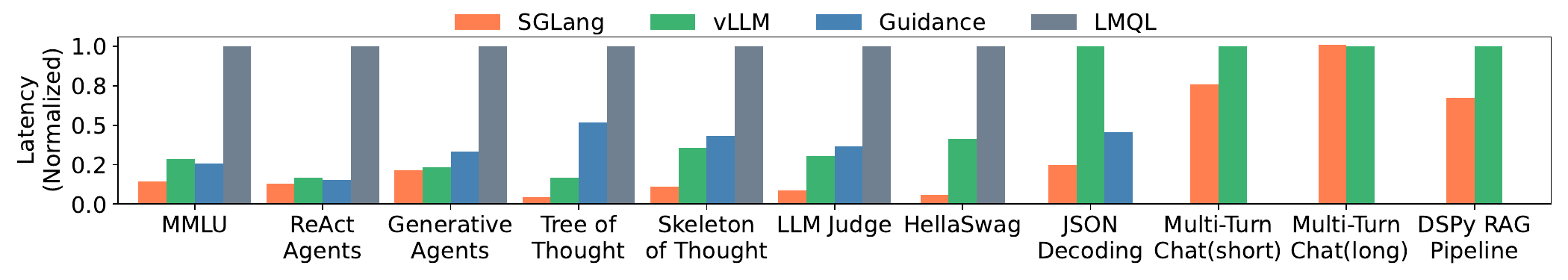}
\vspace{-2.1em}
\caption{Normalized latency on Llama-7B models. Lower is better.}
\vspace{-0em}
\label{fig:exp_llama_7b_latency}
\end{figure}

\subsection{End-to-End Performance}
\textbf{Results on open-weight models.} The latency and throughput results are shown in \autoref{fig:exp_llama_7b_throughput} and \autoref{fig:exp_llama_7b_latency}. \lang improves throughput by up to $6.4\times$ and reduces latency by up to $3.7\times$.
These improvements result from KV cache reuse, the exploitation of parallelism within a single program, and faster constrained decoding. Next, we explain the reasons for the speedup in each benchmark.

On MMLU, \lang can reuse the KV cache of the 5-shot examples with RadixAttention. RadixAttention benefits both throughput and latency. RadixAttention reduces total memory usage by sharing the KV cache, allowing for a larger batch size to improve maximum throughput. RadixAttention also reduces the computation of prefill, thus decreasing the first token latency. On HellaSwag, \lang reuses the KV cache of both few-shot examples and the common question prefix for multiple choices, resulting in two-level sharing. 
For the ReAct and generative agents, \lang reuses the KV cache of the agent template and previous calls. On Tree-of-thought and Skeleton-of-thought, \lang parallelizes the generation calls within a single program and reuses the KV cache as much as possible. On JSON decoding, \lang accelerates decoding by decoding multiple tokens at once with a compressed finite state machine. In multi-turn chat, \lang reuses the KV cache of the chat history. The speedup is more noticeable for short outputs because KV cache reuse mostly helps reduce the prefix time. For long outputs, because there is not much sharing between different chat sessions and the decoding time dominates, there is almost no speedup.
In the DSPy RAG pipeline, \lang reuses the KV cache of the common context example.
On these benchmarks, the cache hit rate ranges from 50\% to 99\%. 
\autoref{fig:exp_cache_hit_rate} (Appendix) lists the achieved and optimal cache hit rates for all of them, showing that our cache-aware scheduling approaches 96\% of the optimal hit rate on average.

We exclude LMQL and Guidance from some of the last five benchmarks due to slow performance and missing functionalities. LMQL's issues stem from slow token-level processing and an unoptimized backend, while Guidance lacks batching and parallelism support.

\begin{figure}[t]
\centering
\vspace{-1em}
\includegraphics[width=\linewidth]{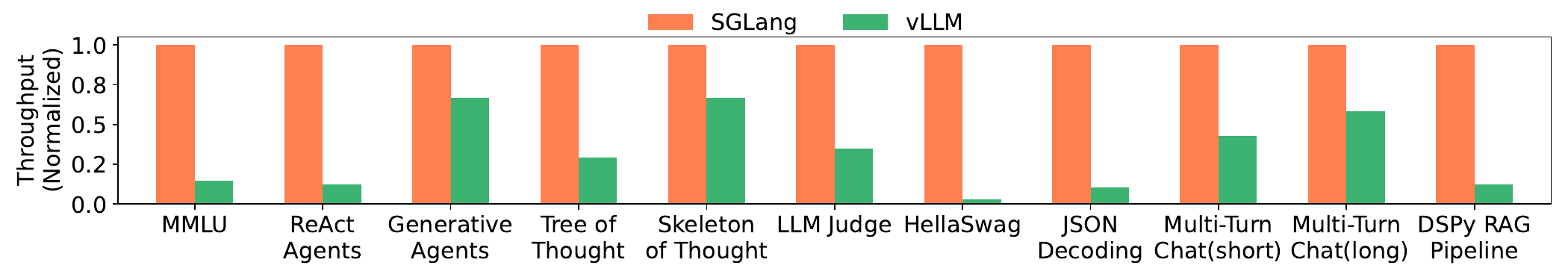}
\vspace{-2.0em}
\caption{Normalized throughput on Mixtral-8x7B models with tensor parallelism. Higher is better.}
\vspace{-1.5em}
\label{fig:exp_mixtral_8x7b_throughput}
\end{figure}

\textbf{Results on larger models with tensor parallelism.}
We run larger models, Mixtral-8x7B and Llama-70B, with tensor parallelism on the same set of benchmarks and report the results in \autoref{fig:exp_mixtral_8x7b_throughput} and \autoref{fig:exp_llama_70b_throughput} (Appendix). The speedup on larger models shows a trend similar to that observed on smaller models, indicating that our optimization generalizes well to larger models.
We omit Guidance and LMQL here because they lack efficient implementations of tensor parallelism.

\textbf{Results on multi-modal models.}
\lang has native support for multi-modal models with the \texttt{image} and \texttt{video} primitives. The optimizations in this paper are compatible with multi-modal models. For RadixAttention, we compute the hash of the input images and use it as the key in the radix tree, allowing us to reuse the KV cache of the image tokens from the same image.
We run LLaVA-v1.5-7B (image) on llava-bench-in-the-wild and LLaVA-NeXT-34B (video) on ActivityNet. Because these models are not well supported by other baseline systems, we use the model authors' original implementation in Hugging Face Transformers as the baseline.
As shown in \autoref{tab:e2e_multi_modal}, \lang provides throughput up to $6\times$ higher on these benchmarks. In llava-bench-in-the-wild, there are multiple questions about the same image, and \lang runtime reuses the KV cache in this case.

\begin{table}[t]
\footnotesize
\centering
\vspace{-1.5em}
\caption{Throughput comparison on multi-modal LLaVA image and video models.}
\resizebox{0.8\columnwidth}{!}{
\begin{tabular}{ccc}
\toprule
Model                     & LLaVA-v1.5-7B (image) & LLaVA-NeXT-34B (video) \\
\midrule
Author's original implementation  &  0.18 image/s          &  0.02 frame/s            \\
\lang                     &  \textbf{1.15 image/s} & \textbf{0.10 frame/s}    \\
\bottomrule
\end{tabular}
}
\vspace{2em}
\label{tab:e2e_multi_modal}
\end{table}

\textbf{Production deployment.}
\lang has been deployed in Chatbot Arena~\cite{chiang2024chatbot} to serve open-weight models. Due to low traffic for some models, only one \lang worker serves each. After one month, we observed a 52.4\% RadixAttention cache hit rate for LLaVA-Next-34B~\cite{liu2024llavanext} and 74.1\% for Vicuna-33B~\cite{vicuna2023}. Cache hits come from common system messages, frequently reused example images, and multi-turn chat histories. This reduces first-token latency by an average of $1.7\times$ for Vicuna-33B.

\textbf{Results on API models.} We test a prompt that extracts three fields from a Wikipedia page using OpenAI's GPT-3.5 model. By using few-shot prompting, the accuracy of API speculative execution is high, and it reduces the cost of input tokens by about threefold due to the extraction of three fields.

\begin{figure}[t]
\centering
\vspace{-2.5em}
\includegraphics[width=\linewidth]{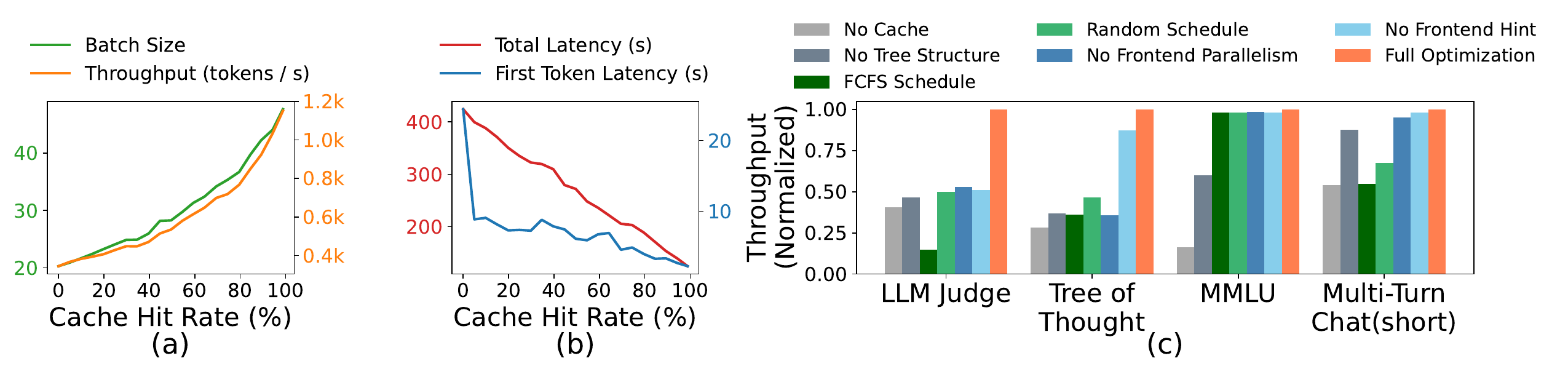}
\vspace{-1.5em}
\caption{(a)(b) Cache hit rate ablation study. (c) RadixAttention ablation study.}
\vspace{-2em}
\label{fig:exp_hit_rate_vs_perf}
\end{figure}

\subsection{Ablation Study}

\textbf{Cache hit rate vs. latency/throughput.}
\autoref{fig:exp_hit_rate_vs_perf}(a)(b) shows the relationship between cache hit rate and performance metrics (first token latency, total latency, batch size, and throughput) on the tree-of-thought benchmark. The figure is obtained by partially disabling matched tokens at runtime. It shows that a higher cache hit rate leads to a larger batch size, higher throughput, and lower latency.

\textbf{Effectiveness of RadixAttention.}
We test the effectiveness of RadixAttention and its components on several representative benchmarks. As shown in \autoref{fig:exp_hit_rate_vs_perf}(c), "No Cache" means not using any cache, "No Tree-Structure" means using a simple table-based cache instead of a tree-structured cache, "FCFS Schedule" means using a first-come-first-serve policy instead of our cache-aware scheduling, "Random Schedule" means using a random order to schedule requests, "No Frontend Parallelism" means disabling parallelism in the interpreter, "No Frontend Hint" means disabling sending the fork hints from the interpreters, and "Full optimizations" means we turn on all optimizations. The experimental results show that each of these components is required to achieve the best performance. Disabling parallelism and hints from the frontend interpreter also results in suboptimal runtime performance, highlighting the importance of co-designing the frontend language and runtime.

\textbf{Overhead of RadixAttention.}
We test the overhead of RadixAttention on a benchmark without any KV cache reuse opportunities. The benchmark measures throughput on the ShareGPT dataset.
It takes 74.3 seconds to run 100 requests; however, the time used for managing the RadixAttention data structures is only 0.2 seconds, which is a negligible overhead of less than 0.3\%. This is because the complexity of tree operations is linear and small. Thus, we can turn on RadixAttention by default.

\textbf{Effectiveness of the compressed finite state machine.}
We test the effectiveness of the compressed finite state machine and its components on the JSON decoding benchmark.
Experimental results show that the compressed finite state machine increases the throughput by $1.6\times$ because it can decode multiple tokens at once. In addition, we need to preprocess the state machine and reuse it for a batch of requests. Otherwise, redoing the preprocessing for each request makes the throughput $2.4\times$ lower.

\section{Related Work}
\label{sec:related-work}
Various works have explored the reuse of the KV cache, and many of them are concurrent with our work. Uniquely, our RadixAttention first proposes treating the KV cache as a tree-based LRU cache. It is the first solution that supports multi-level sharing, cache-aware scheduling, frontend-runtime co-scheduling, and distributed cases.
vLLM~\cite{kwon2023vllm} and ChunkedAttention~\cite{ye2024chunkattention} explore some simple reuse cases (e.g., system prompt sharing) but do not cover multi-level tree-structured sharing or LRU caching. PromptCache~\cite{gim2023prompt} proposes the modular reuse of the KV cache beyond the prefix but can impact accuracy by up to a 43\% drop. HydraGen~\cite{juravsky2024hydragen}, FlashInfer~\cite{flashinfer}, and ChunkedAttention~\cite{ye2024chunkattention} focus on CUDA kernel optimizations and do not include the concept of an LRU cache. API Serve~\cite{abhyankar2024apiserve} and LLM-SQL~\cite{liu2024optimizing} study KV cache reuse for specific applications such as interleaving with external API calls and relational databases, but they do not have our radix tree or cache-aware scheduling.

Several LLM programming and agent frameworks exist, such as Guidance~\cite{guidance}, LMQL~\cite{beurer2023lmql}, DSPy~\cite{khattab2023dspy}, LangChain~\cite{langchain}, AutoGen~\cite{wu2023autogen}, and LLM Compiler~\cite{kim2023llm}. Guidance and LMQL are most similar to \lang, and we compare them in \autoref{sec:programming_model}. Our innovation lies in novel runtime optimizations for accelerating the proposed programming model. \lang is compatible with other frameworks and can accelerate them (e.g., the DSPy example in our evaluation). Additionally, \lang is compatible with many other common inference optimizations ~\cite{yu2022orca, pope2023efficiently, aminabadi2022deepspeed, kwon2023vllm, flashinfer, dao2022flashattention, lin2023awq, hooper2024kvquant, kang2024gear, liu2024kivi, liu2024scissorhands, ge2023model}.

\section{Future Directions and Conclusion }
\label{sec:conclusion}

\textbf{Future directions.}
Despite the progress made with \lang, several limitations remain that reveal promising directions for future research. These include extending \lang to support additional output modalities, adapting RadixAttention to operate across multiple levels of the memory hierarchy (e.g., DRAM, Disk)~\cite{sheng2023flexgen}, enabling fuzzy semantic matching within RadixAttention, providing higher-level primitives atop \lang,
fixing starvation in cache-aware scheduling~\cite{sheng2023fairness},
and enhancing the \lang compiler to perform advanced static optimizations such as scheduling and memory planning.

\textbf{Conclusion.}
We introduce \lang, a framework for efficient programming and executing structured language model programs. \lang significantly improves the throughput and latency of complex LM programs through novel optimizations like RadixAttention, compressed finite state machines, and a language interpreter. It is a valuable tool for developing advanced prompting techniques and agent workflows. The source code is publicly available.

\section*{Acknowledgement}
This project is supported by the Stanford Center for Automated Reasoning and gifts from Astronomer, Google, IBM, Intel, Lacework, Microsoft, Mohamed Bin Zayed University of Artificial Intelligence, Nexla, Samsung SDS, Uber, and VMware.
Lianmin Zheng is supported by a Meta Ph.D. Fellowship.
We thank Yuanhan Zhang and Bo Li for the LLaVA-NeXT (video) support.

\bibliographystyle{plain}
\bibliography{reference}

\begin{thebibliography}{10}

\bibitem{abhyankar2024apiserve}
Reyna Abhyankar, Zijian He, Vikranth Srivatsa, Hao Zhang, and Yiying Zhang.
\newblock Apiserve: Efficient api support for large-language model inferencing.
\newblock {\em arXiv preprint arXiv:2402.01869}, 2024.

\bibitem{alayrac2022flamingo}
Jean-Baptiste Alayrac, Jeff Donahue, Pauline Luc, Antoine Miech, Iain Barr, Yana Hasson, Karel Lenc, Arthur Mensch, Katherine Millican, Malcolm Reynolds, et~al.
\newblock Flamingo: a visual language model for few-shot learning.
\newblock {\em Advances in Neural Information Processing Systems}, 35:23716--23736, 2022.

\bibitem{aminabadi2022deepspeed}
Reza~Yazdani Aminabadi, Samyam Rajbhandari, Ammar~Ahmad Awan, Cheng Li, Du~Li, Elton Zheng, Olatunji Ruwase, Shaden Smith, Minjia Zhang, Jeff Rasley, et~al.
\newblock Deepspeed-inference: enabling efficient inference of transformer models at unprecedented scale.
\newblock In {\em SC22: International Conference for High Performance Computing, Networking, Storage and Analysis}, pages 1--15. IEEE, 2022.

\bibitem{beurer2023lmql}
Luca Beurer-Kellner, Marc Fischer, and Martin Vechev.
\newblock Prompting is programming: A query language for large language models.
\newblock {\em Proceedings of the ACM on Programming Languages}, 7(PLDI):1946--1969, 2023.

\bibitem{brown2020language}
Tom Brown, Benjamin Mann, Nick Ryder, Melanie Subbiah, Jared~D Kaplan, Prafulla Dhariwal, Arvind Neelakantan, Pranav Shyam, Girish Sastry, Amanda Askell, et~al.
\newblock Language models are few-shot learners.
\newblock {\em Advances in neural information processing systems}, 33:1877--1901, 2020.

\bibitem{bubeck2023sparks}
S{\'e}bastien Bubeck, Varun Chandrasekaran, Ronen Eldan, Johannes Gehrke, Eric Horvitz, Ece Kamar, Peter Lee, Yin~Tat Lee, Yuanzhi Li, Scott Lundberg, et~al.
\newblock Sparks of artificial general intelligence: Early experiments with gpt-4.
\newblock {\em arXiv preprint arXiv:2303.12712}, 2023.

\bibitem{vicuna2023}
Wei-Lin Chiang, Zhuohan Li, Zi~Lin, Ying Sheng, Zhanghao Wu, Hao Zhang, Lianmin Zheng, Siyuan Zhuang, Yonghao Zhuang, Joseph~E. Gonzalez, Ion Stoica, and Eric~P. Xing.
\newblock Vicuna: An open-source chatbot impressing gpt-4 with 90\%* chatgpt quality, March 2023.

\bibitem{chiang2024chatbot}
Wei-Lin Chiang, Lianmin Zheng, Ying Sheng, Anastasios~Nikolas Angelopoulos, Tianle Li, Dacheng Li, Hao Zhang, Banghua Zhu, Michael Jordan, Joseph~E Gonzalez, et~al.
\newblock Chatbot arena: An open platform for evaluating llms by human preference.
\newblock {\em arXiv preprint arXiv:2403.04132}, 2024.

\bibitem{chowdhery2022palm}
Aakanksha Chowdhery, Sharan Narang, Jacob Devlin, Maarten Bosma, Gaurav Mishra, Adam Roberts, Paul Barham, Hyung~Won Chung, Charles Sutton, Sebastian Gehrmann, et~al.
\newblock Palm: Scaling language modeling with pathways.
\newblock {\em arXiv preprint arXiv:2204.02311}, 2022.

\bibitem{dao2022flashattention}
Tri Dao, Dan Fu, Stefano Ermon, Atri Rudra, and Christopher R{\'e}.
\newblock Flashattention: Fast and memory-efficient exact attention with io-awareness.
\newblock {\em Advances in Neural Information Processing Systems}, 35:16344--16359, 2022.

\bibitem{ge2023model}
Suyu Ge, Yunan Zhang, Liyuan Liu, Minjia Zhang, Jiawei Han, and Jianfeng Gao.
\newblock Model tells you what to discard: Adaptive kv cache compression for llms.
\newblock In {\em The Twelfth International Conference on Learning Representations}, 2023.

\bibitem{gim2023prompt}
In~Gim, Guojun Chen, Seung-seob Lee, Nikhil Sarda, Anurag Khandelwal, and Lin Zhong.
\newblock Prompt cache: Modular attention reuse for low-latency inference.
\newblock {\em arXiv preprint arXiv:2311.04934}, 2023.

\bibitem{guidance}
{Guidance AI}.
\newblock A guidance language for controlling large language models.
\newblock \url{https://github.com/guidance-ai/guidance}.
\newblock Accessed: 2023-11.

\bibitem{hendrycks2020measuring}
Dan Hendrycks, Collin Burns, Steven Basart, Andy Zou, Mantas Mazeika, Dawn Song, and Jacob Steinhardt.
\newblock Measuring massive multitask language understanding.
\newblock In {\em International Conference on Learning Representations}, 2020.

\bibitem{hooper2024kvquant}
Coleman Hooper, Sehoon Kim, Hiva Mohammadzadeh, Michael~W Mahoney, Yakun~Sophia Shao, Kurt Keutzer, and Amir Gholami.
\newblock Kvquant: Towards 10 million context length llm inference with kv cache quantization.
\newblock {\em arXiv preprint arXiv:2401.18079}, 2024.

\bibitem{tgi}
{Hugging Face}.
\newblock Text generation inference.
\newblock \url{https://github.com/huggingface/text-generation-inference}.
\newblock Accessed: 2023-11.

\bibitem{jiang2024mixtral}
Albert~Q Jiang, Alexandre Sablayrolles, Antoine Roux, Arthur Mensch, Blanche Savary, Chris Bamford, Devendra~Singh Chaplot, Diego de~las Casas, Emma~Bou Hanna, Florian Bressand, et~al.
\newblock Mixtral of experts.
\newblock {\em arXiv preprint arXiv:2401.04088}, 2024.

\bibitem{juravsky2024hydragen}
Jordan Juravsky, Bradley Brown, Ryan Ehrlich, Daniel~Y Fu, Christopher R{\'e}, and Azalia Mirhoseini.
\newblock Hydragen: High-throughput llm inference with shared prefixes.
\newblock {\em arXiv preprint arXiv:2402.05099}, 2024.

\bibitem{kang2024gear}
Hao Kang, Qingru Zhang, Souvik Kundu, Geonhwa Jeong, Zaoxing Liu, Tushar Krishna, and Tuo Zhao.
\newblock Gear: An efficient kv cache compression recipefor near-lossless generative inference of llm.
\newblock {\em arXiv preprint arXiv:2403.05527}, 2024.

\bibitem{khattab2023dspy}
Omar Khattab, Arnav Singhvi, Paridhi Maheshwari, Zhiyuan Zhang, Keshav Santhanam, Sri Vardhamanan, Saiful Haq, Ashutosh Sharma, Thomas~T. Joshi, Hanna Moazam, Heather Miller, Matei Zaharia, and Christopher Potts.
\newblock Dspy: Compiling declarative language model calls into self-improving pipelines.
\newblock {\em arXiv preprint arXiv:2310.03714}, 2023.

\bibitem{kim2023llm}
Sehoon Kim, Suhong Moon, Ryan Tabrizi, Nicholas Lee, Michael~W Mahoney, Kurt Keutzer, and Amir Gholami.
\newblock An llm compiler for parallel function calling.
\newblock {\em arXiv preprint arXiv:2312.04511}, 2023.

\bibitem{kuchnik2023validating}
Michael Kuchnik, Virginia Smith, and George Amvrosiadis.
\newblock Validating large language models with relm.
\newblock {\em Proceedings of Machine Learning and Systems}, 5, 2023.

\bibitem{kwon2023vllm}
Woosuk Kwon, Zhuohan Li, Siyuan Zhuang, Ying Sheng, Lianmin Zheng, Cody~Hao Yu, Joseph Gonzalez, Hao Zhang, and Ion Stoica.
\newblock Efficient memory management for large language model serving with pagedattention.
\newblock In {\em Proceedings of the 29th Symposium on Operating Systems Principles}, pages 611--626, 2023.

\bibitem{langchain}
{LangChain AI}.
\newblock Langchain.
\newblock \url{https://github.com/langchain-ai/langchain}.
\newblock Accessed: 2023-11.

\bibitem{li2022competition}
Yujia Li, David Choi, Junyoung Chung, Nate Kushman, Julian Schrittwieser, R{\'e}mi Leblond, Tom Eccles, James Keeling, Felix Gimeno, Agustin Dal~Lago, et~al.
\newblock Competition-level code generation with alphacode.
\newblock {\em Science}, 378(6624):1092--1097, 2022.

\bibitem{lin2023awq}
Ji~Lin, Jiaming Tang, Haotian Tang, Shang Yang, Xingyu Dang, and Song Han.
\newblock Awq: Activation-aware weight quantization for llm compression and acceleration.
\newblock {\em arXiv preprint arXiv:2306.00978}, 2023.

\bibitem{liu2023improved}
Haotian Liu, Chunyuan Li, Yuheng Li, and Yong~Jae Lee.
\newblock Improved baselines with visual instruction tuning.
\newblock {\em arXiv preprint arXiv:2310.03744}, 2023.

\bibitem{liu2024llavanext}
Haotian Liu, Chunyuan Li, Yuheng Li, Bo~Li, Yuanhan Zhang, Sheng Shen, and Yong~Jae Lee.
\newblock Llava-next: Improved reasoning, ocr, and world knowledge, January 2024.

\bibitem{liu2024optimizing}
Shu Liu, Asim Biswal, Audrey Cheng, Xiangxi Mo, Shiyi Cao, Joseph~E Gonzalez, Ion Stoica, and Matei Zaharia.
\newblock Optimizing llm queries in relational workloads.
\newblock {\em arXiv preprint arXiv:2403.05821}, 2024.

\bibitem{liu2023prompting}
Xiaoxia Liu, Jingyi Wang, Jun Sun, Xiaohan Yuan, Guoliang Dong, Peng Di, Wenhai Wang, and Dongxia Wang.
\newblock Prompting frameworks for large language models: A survey.
\newblock {\em arXiv preprint arXiv:2311.12785}, 2023.

\bibitem{liu2024scissorhands}
Zichang Liu, Aditya Desai, Fangshuo Liao, Weitao Wang, Victor Xie, Zhaozhuo Xu, Anastasios Kyrillidis, and Anshumali Shrivastava.
\newblock Scissorhands: Exploiting the persistence of importance hypothesis for llm kv cache compression at test time.
\newblock {\em Advances in Neural Information Processing Systems}, 36, 2024.

\bibitem{liu2024kivi}
Zirui Liu, Jiayi Yuan, Hongye Jin, Shaochen Zhong, Zhaozhuo Xu, Vladimir Braverman, Beidi Chen, and Xia Hu.
\newblock Kivi: A tuning-free asymmetric 2bit quantization for kv cache.
\newblock {\em arXiv preprint arXiv:2402.02750}, 2024.

\bibitem{ning2024skeletonofthought}
Xuefei Ning, Zinan Lin, Zixuan Zhou, Zifu Wang, Huazhong Yang, and Yu~Wang.
\newblock Skeleton-of-thought: Prompting {LLM}s for efficient parallel generation.
\newblock In {\em The Twelfth International Conference on Learning Representations}, 2024.

\bibitem{nvidia_tensorrt_llm}
{NVIDIA}.
\newblock Tensorrt-llm.
\newblock \url{https://github.com/NVIDIA/TensorRT-LLM}.
\newblock Accessed: 2023-11.

\bibitem{openai2023gpt4}
OpenAI.
\newblock Gpt-4 technical report, 2023.

\bibitem{Park2023GenerativeAgents}
Joon~Sung Park, Joseph~C. O'Brien, Carrie~J. Cai, Meredith~Ringel Morris, Percy Liang, and Michael~S. Bernstein.
\newblock Generative agents: Interactive simulacra of human behavior.
\newblock In {\em In the 36th Annual ACM Symposium on User Interface Software and Technology (UIST '23)}, UIST '23, New York, NY, USA, 2023. Association for Computing Machinery.

\bibitem{paszke2019pytorch}
Adam Paszke, Sam Gross, Francisco Massa, Adam Lerer, James Bradbury, Gregory Chanan, Trevor Killeen, Zeming Lin, Natalia Gimelshein, Luca Antiga, et~al.
\newblock Pytorch: An imperative style, high-performance deep learning library.
\newblock {\em Advances in neural information processing systems}, 32, 2019.

\bibitem{patil2023gorilla}
Shishir~G. Patil, Tianjun Zhang, Xin Wang, and Joseph~E. Gonzalez.
\newblock Gorilla: Large language model connected with massive apis.
\newblock {\em arXiv preprint arXiv:2305.15334}, 2023.

\bibitem{pope2023efficiently}
Reiner Pope, Sholto Douglas, Aakanksha Chowdhery, Jacob Devlin, James Bradbury, Jonathan Heek, Kefan Xiao, Shivani Agrawal, and Jeff Dean.
\newblock Efficiently scaling transformer inference.
\newblock {\em Proceedings of Machine Learning and Systems}, 5, 2023.

\bibitem{saha2023branch}
Swarnadeep Saha, Omer Levy, Asli Celikyilmaz, Mohit Bansal, Jason Weston, and Xian Li.
\newblock Branch-solve-merge improves large language model evaluation and generation.
\newblock {\em arXiv preprint arXiv:2310.15123}, 2023.

\bibitem{schick2023toolformer}
Timo Schick, Jane Dwivedi-Yu, Roberto Dess{\`\i}, Roberta Raileanu, Maria Lomeli, Luke Zettlemoyer, Nicola Cancedda, and Thomas Scialom.
\newblock Toolformer: Language models can teach themselves to use tools.
\newblock {\em arXiv preprint arXiv:2302.04761}, 2023.

\bibitem{sheng2023fairness}
Ying Sheng, Shiyi Cao, Dacheng Li, Banghua Zhu, Zhuohan Li, Danyang Zhuo, Joseph~E Gonzalez, and Ion Stoica.
\newblock Fairness in serving large language models.
\newblock {\em arXiv preprint arXiv:2401.00588}, 2023.

\bibitem{sheng2023flexgen}
Ying Sheng, Lianmin Zheng, Binhang Yuan, Zhuohan Li, Max Ryabinin, Beidi Chen, Percy Liang, Christopher R{\'e}, Ion Stoica, and Ce~Zhang.
\newblock Flexgen: high-throughput generative inference of large language models with a single gpu.
\newblock In {\em International Conference on Machine Learning}, pages 31094--31116. PMLR, 2023.

\bibitem{shoeybi2019megatron}
Mohammad Shoeybi, Mostofa Patwary, Raul Puri, Patrick LeGresley, Jared Casper, and Bryan Catanzaro.
\newblock Megatron-lm: Training multi-billion parameter language models using model parallelism.
\newblock {\em arXiv preprint arXiv:1909.08053}, 2019.

\bibitem{srivatsa2024preble}
Vikranth Srivatsa, Zijian He, Reyna Abhyankar, Dongming Li, and Yiying Zhang.
\newblock Preble: Efficient distributed prompt scheduling for llm serving.
\newblock 2024.

\bibitem{sumers2023cognitive}
Theodore Sumers, Shunyu Yao, Karthik Narasimhan, and Thomas Griffiths.
\newblock Cognitive architectures for language agents.
\newblock {\em Transactions on Machine Learning Research}, 2023.

\bibitem{team2023gemini}
Gemini Team, Rohan Anil, Sebastian Borgeaud, Yonghui Wu, Jean-Baptiste Alayrac, Jiahui Yu, Radu Soricut, Johan Schalkwyk, Andrew~M Dai, Anja Hauth, et~al.
\newblock Gemini: a family of highly capable multimodal models.
\newblock {\em arXiv preprint arXiv:2312.11805}, 2023.

\bibitem{tillet2019triton}
Philippe Tillet, Hsiang-Tsung Kung, and David Cox.
\newblock Triton: an intermediate language and compiler for tiled neural network computations.
\newblock In {\em Proceedings of the 3rd ACM SIGPLAN International Workshop on Machine Learning and Programming Languages}, pages 10--19, 2019.

\bibitem{touvron2023llama2}
Hugo Touvron, Louis Martin, Kevin Stone, Peter Albert, Amjad Almahairi, Yasmine Babaei, Nikolay Bashlykov, Soumya Batra, Prajjwal Bhargava, Shruti Bhosale, et~al.
\newblock Llama 2: Open foundation and fine-tuned chat models.
\newblock {\em arXiv preprint arXiv:2307.09288}, 2023.

\bibitem{Tran-Thien_2024}
Vivien Tran-Thien.
\newblock Fast, high-fidelity llm decoding with regex constraints, 2024.

\bibitem{vaswani2017attention}
Ashish Vaswani, Noam Shazeer, Niki Parmar, Jakob Uszkoreit, Llion Jones, Aidan~N Gomez, {\L}ukasz Kaiser, and Illia Polosukhin.
\newblock Attention is all you need.
\newblock {\em Advances in neural information processing systems}, 30, 2017.

\bibitem{wang2023voyager}
Guanzhi Wang, Yuqi Xie, Yunfan Jiang, Ajay Mandlekar, Chaowei Xiao, Yuke Zhu, Linxi Fan, and Anima Anandkumar.
\newblock Voyager: An open-ended embodied agent with large language models.
\newblock {\em arXiv preprint arXiv: Arxiv-2305.16291}, 2023.

\bibitem{wang2022self}
Xuezhi Wang, Jason Wei, Dale Schuurmans, Quoc~V Le, Ed~H Chi, Sharan Narang, Aakanksha Chowdhery, and Denny Zhou.
\newblock Self-consistency improves chain of thought reasoning in language models.
\newblock In {\em The Eleventh International Conference on Learning Representations}, 2022.

\bibitem{willard2023efficient}
Brandon~T Willard and R{\'e}mi Louf.
\newblock Efficient guided generation for large language models.
\newblock {\em arXiv e-prints}, pages arXiv--2307, 2023.

\bibitem{wu2023autogen}
Qingyun Wu, Gagan Bansal, Jieyu Zhang, Yiran Wu, Shaokun Zhang, Erkang Zhu, Beibin Li, Li~Jiang, Xiaoyun Zhang, and Chi Wang.
\newblock Autogen: Enabling next-gen llm applications via multi-agent conversation framework.
\newblock {\em arXiv preprint arXiv:2308.08155}, 2023.

\bibitem{yao2023tree}
Shunyu Yao, Dian Yu, Jeffrey Zhao, Izhak Shafran, Thomas~L Griffiths, Yuan Cao, and Karthik Narasimhan.
\newblock Tree of thoughts: Deliberate problem solving with large language models.
\newblock {\em arXiv preprint arXiv:2305.10601}, 2023.

\bibitem{yao2022react}
Shunyu Yao, Jeffrey Zhao, Dian Yu, Nan Du, Izhak Shafran, Karthik~R Narasimhan, and Yuan Cao.
\newblock React: Synergizing reasoning and acting in language models.
\newblock In {\em The Eleventh International Conference on Learning Representations}, 2022.

\bibitem{ye2024chunkattention}
Lu~Ye, Ze~Tao, Yong Huang, and Yang Li.
\newblock Chunkattention: Efficient self-attention with prefix-aware kv cache and two-phase partition.
\newblock {\em arXiv preprint arXiv:2402.15220}, 2024.

\bibitem{flashinfer}
Zihao Ye, Lequn Chen, Ruihang Lai, Yilong Zhao, Size Zheng, Junru Shao, Bohan Hou, Hongyi Jin, Yifei Zuo, Liangsheng Yin, Tianqi Chen, and Luis Ceze.
\newblock Accelerating self-attentions for llm serving with flashinfer, February 2024.

\bibitem{yu2022orca}
Gyeong-In Yu, Joo~Seong Jeong, Geon-Woo Kim, Soojeong Kim, and Byung-Gon Chun.
\newblock Orca: A distributed serving system for $\{$Transformer-Based$\}$ generative models.
\newblock In {\em 16th USENIX Symposium on Operating Systems Design and Implementation (OSDI 22)}, pages 521--538, 2022.

\bibitem{zellers2019hellaswag}
Rowan Zellers, Ari Holtzman, Yonatan Bisk, Ali Farhadi, and Yejin Choi.
\newblock Hellaswag: Can a machine really finish your sentence?
\newblock In {\em Proceedings of the 57th Annual Meeting of the Association for Computational Linguistics}, pages 4791--4800, 2019.

\bibitem{zhang2024llavanextvideo}
Yuanhan Zhang, Bo~Li, haotian Liu, Yong~jae Lee, Liangke Gui, Di~Fu, Jiashi Feng, Ziwei Liu, and Chunyuan Li.
\newblock Llava-next: A strong zero-shot video understanding model, April 2024.

\end{thebibliography}

\newpage
\appendix
\section{Additional Details on RadixAttention}
\label{sec:additional_radix_attention}

\subsection{Background on the KV Cache}
\label{subsec:background_kv_cache}

Most LLMs in use today, such as GPT-3~\cite{brown2020language}, PaLM~\cite{chowdhery2022palm}, and LLaMA~\cite{touvron2023llama2}, are based on the autoregressive Transformer architecture~\cite{vaswani2017attention}.
These models predict the probability of the next token in a sequence based on the preceding tokens.
During inference, the model first processes a sequence of input tokens through a forward pass (this process is called "prefill").
It then sequentially decodes output tokens, with each token depending on prior tokens (this process is called "decoding").
We refer to the process of taking a sequence of input tokens and generating a sequence of output tokens as a single-generation call.
Throughout this process, each token generates some intermediate tensors, which are used for decoding further tokens.
These intermediate tensors, known as the "KV Cache," are named for the key-value pairs in the self-attention mechanism.
An important observation when discussing optimizations in this paper is that the computation of the KV cache only depends on all previous tokens, so different sequences with the same prefix can reuse the KV cache of the prefix tokens and avoid redundant computation.

Often in LLM programs, multiple text segments and generation calls are appended to a single prompt. Caching the computed KV cache for previous tokens across multiple chained calls can reduce redundant computation. This optimization, however, is neither free nor trivial, as it requires additional storage and more complex memory management.
In addition, it is common in LLM programs to generate multiple outputs from a single prompt or to fork a new prompt from the current state~\cite{li2022competition}. Basic prefix sharing has been investigated in vLLM~\cite{kwon2023vllm}. More advanced sharing patterns like irregular tree-structured sharing can also be employed. \autoref{fig:example_sharing} shows four typical patterns of KV cache sharing across multiple calls; none of the existing systems can automatically handle all of them.
On the contrary, our RadixAttention in \autoref{sec:radix_attention} can handle all of them automatically at runtime.

\begin{figure}[t]
\centering
\vspace{-2em}
\includegraphics[width=\linewidth]{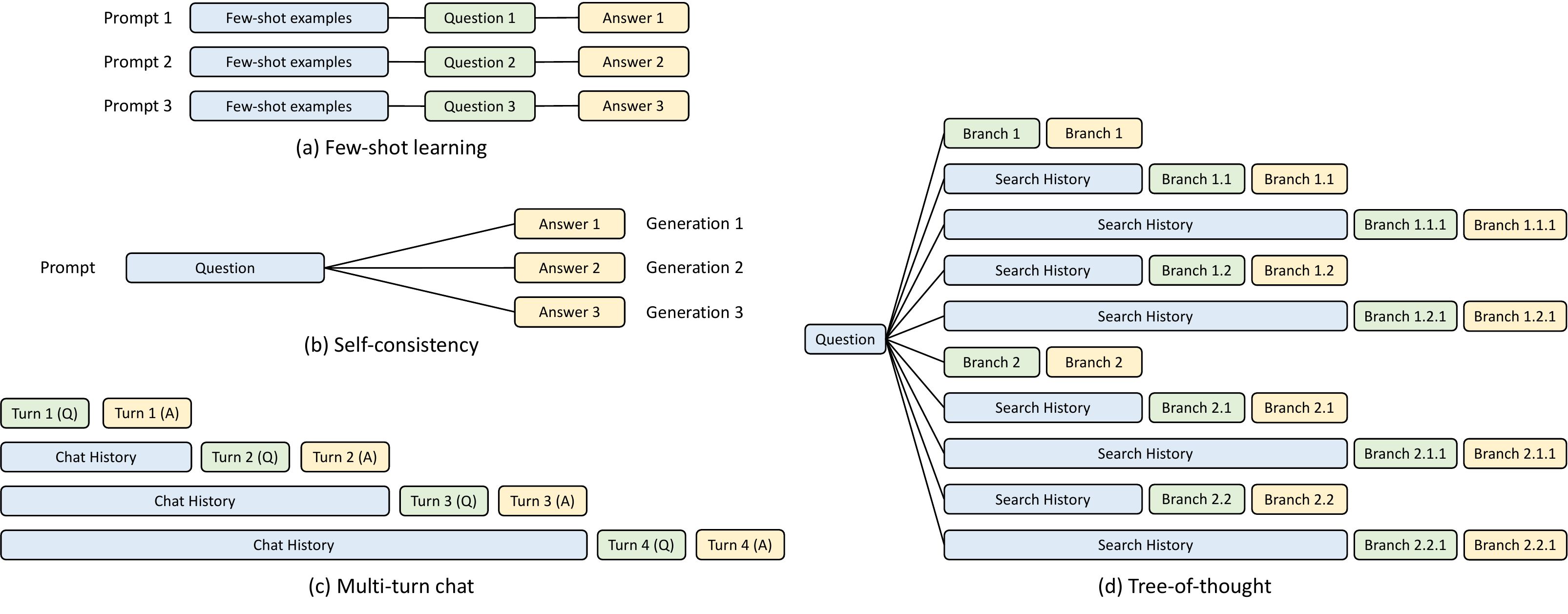}
\vspace{-1em}
\caption{KV cache sharing examples. Blue boxes represent shareable prompt parts, green boxes indicate non-shareable parts and yellow boxes mark non-shareable model outputs. Shareable elements include few-shot learning examples, questions in self-consistency~\cite{wang2022self}, chat history in multi-turn chat, and search history in tree-of-thought~\cite{yao2023tree}.}
\label{fig:example_sharing}
\vspace{-1.5em}
\end{figure}

\begin{algorithm}[t]
\caption{Cache-Aware Scheduling for RadixAttention with Continuous Batching.}\label{alg:cache_aware_scheduling}
\begin{algorithmic}
\Require The radix tree $T$, the memory pool $P$, the current running batch $B$, the waiting queue $Q$.
\Ensure Finished requests and updated system state.

\State \text{// Get all requests from the waiting queue}
\State $requests \leftarrow Q.\text{get\_all\_requests}()$

\State \text{// Search for the prefix matching for all waiting requests}
\For{$req \in requests$}
    \State $req.prefix\_node, req.prefix\_len \leftarrow$ $T.\text{match\_prefix}(req.input\_tokens)$
\EndFor

\State \text{// Sort the requests according to the matched prefix lenghts}
\State $requests.\text{sort}()$

\State \text{// Select requests for the next batch}
\State $available\_size \leftarrow T.\text{evictable\_size}() + P.\text{available\_size}()$
\State $current\_size \leftarrow 0$
\State $new\_batch \leftarrow []$

\For{$req \in requests$}
    \If{$req.\text{size}() + current\_size < available\_size$}
        \State $new\_batch.\text{append}(req)$
        \State $delta \leftarrow T.\text{increase\_ref\_counter}(req.prefix\_node)$
        \State $available\_size \leftarrow available\_size + delta$
    \EndIf
\EndFor
\State $Q.\text{remove\_requests}(new\_batch)$

\State \text{// Insert requests into the current running batch}
\State $B.\text{merge}(new\_batch)$

\State \text{// Allocate new memory and do eviction if necessary}
\State $needed\_size \leftarrow B.\text{needed\_size}()$
\State $success, buffer \leftarrow P.\text{alloc}(needed\_size)$
\If{not $success$}
    \State $T.\text{evict}(needed\_size)$
    \State $success, buffer \leftarrow P.\text{alloc}(needed\_size)$
\EndIf

\State $B.\text{run}(buffer)$

\State \text{// Process finished requests}
\State $finished\_requests \leftarrow B.\text{drop\_finished\_requests}()$
\For{$req \in finished\_requests$}
    \State $T.\text{decrease\_ref\_counter}(req.prefix\_node)$
    \State $T.\text{insert}(req)$
\EndFor

\State \textbf{return} $finished\_requests$
\end{algorithmic}
\end{algorithm}

\subsection{Pseudocode for Cache-Aware Scheduling}
\label{subsec:pseudocode_cache_aware_scheduling}

\autoref{alg:cache_aware_scheduling} shows the pseudocode of cache-aware scheduling for RadixAttention with continuous batching.

\subsection{Proof of the \autoref{thm:optimal} }
\label{subsec:schedule_proof}

\optimal*
\begin{proof}
First, we show that the depth-first search (DFS) order achieves an optimal cache hit rate.
Let \( R \) denote the set of requests in the batch, and \( T \) denote the radix tree built from \( R \).
For each edge \( e \) of \( T \), the KV cache associated with \( e \) needs to be computed at least once.
Let \( |e| \) denote the size of the KV cache associated with \( e \).
Let \( C \) denote the computational complexity of the KV cache for \( R \).
We obtain the lower bound
\[
C \geq \sum_{e \in \text{edges}(T)} |e|.
\]

Consider we visit the radix tree \( T \) in DFS order.
For each edge \( e \) of \( T \), the first time we compute the KV cache associated with \( e \), we will then compute the whole subtree of \( e \).
During the computation of the subtree of \( e \), the edge \( e \) will be continuously hit, so no additional computation will happen. After finishing the computation for the subtree rooted at \( e \), the edge \( e \) will not be visited again.
Notice that, with a cache size \(\geq\) the maximum request length, which equals the longest path in the radix tree \( T \), edge \( e \) will not be evicted during the computation of its subtree, since the common prefix including \( e \) of the subtree will be continuously hit.
Therefore, the KV cache associated with each edge \( e \) will be computed only once. Thus, we achieve the lower bound
\[
C = \sum_{e \in \text{edges}(T)} |e|.
\]

The cache hit rate, defined as
\[
\frac{\sum_{r\in R}\text{number of cached prefill tokens in $r$}}{\sum_{r\in R}\text{number of prefill tokens in $r$}},
\]
equals \(1 - \frac{C}{\sum_{r\in R}\text{number of prefill tokens}}\), reaches its upper bound, delivering optimality.

Next, we show that the longest-shared-prefix-first order is equivalent to a DFS order by induction.
\begin{itemize}
    \item (Base) In the beginning, since there is nothing cached, a random request that corresponds to a node $x$ in $T$ will be processed. All requests that correspond to the nodes $\{v_1, ..., v_n\}$ on the path from the root to $x$ do not need a recomputation. The computation complexity for requests corresponding to the nodes $\{v_1,..., v_n, x\}$ is aligned with a valid DFS. The path from the root to $x$ is cached.

    \item (Induction) Assume we just visited a node $y$ in $T$, and the visited nodes align with a DFS order.
    Let $P$ denote the path from the root to $y$.
    Then each node that has not been visited has the lowest common ancestor with the visited nodes on $P$.
    Since nodes on $P$ are cached, a node $z$ that has not been visited with the lowest common ancestor on $P$ will have the longest shared prefix.
    The longest-shared-prefix-first order will select $z$, which is a valid DFS order. The path from the root to $z$ will be cached because it is the most recent.
\end{itemize}
\end{proof}

In the online case, the DFS order will be disrupted, but the longest shared prefix schedule still approximates the DFS behavior on the augmented part of the implied radix tree. We show this by considering the step of adding a new batch of requests.

Let \( T \) denote the part of the radix tree that has been visited so far, and \( T' \) denote the whole new radix tree after adding the new batch of requests. Let \( C \) denote the set of cached nodes in \( T \). Let \( \text{longest}(C) \) denote the node in \( C \) that has the longest path from the root and has a subtree in \( T' \) that has not been fully visited.

The longest shared prefix schedule will then process the subtree in $T'$ rooted at \( \text{longest}(C) \) in a DFS order. During this process, eviction could occur, and the remaining cached nodes from \( C \) become \( C^{(1)} \subseteq C \). A DFS will then occur for the subtree in $T'$ rooted at \( \text{longest}(C^{(1)}) \).

Similarly, we will have \( C^{(2)}, \ldots, C^{(k)} \), until \( C^{(k)} \) contains only one leaf node in \( C^{(k)} \) that has its subtree in $T'$ not fully visited. At this point, we have reached a valid DFS state. The remaining part of $T'$ will be visited in DFS order as described in the proof of \autoref{thm:optimal}.

\subsection{Data-Parallel Distributed RadixAttention}
\label{subsec:distributed_radix_attention}
To adapt the RadixAttention for distributed settings with multiple replica workers (i.e., data parallelism), we developed a mechanism wherein each worker maintains its own sub-tree, while the router oversees a meta-tree. 
This meta-tree acts as a trie that tracks all sub-trees and their associated devices. Upon the arrival of a new batch of requests at the router, prefix matching is executed on the meta-tree. We implement various policies based on each request's affinity—measured by the length of the shared prefix with specific workers and other requests in the same group—to make efficient dispatch decisions that minimize redundant computations.
Each time new requests are processed, both the router and workers independently update their respective trees. Should an eviction occur at a worker node, it commits this eviction to a queue, which the router then processes to update the meta-tree during periods of low activity.
We benchmarked this distributed configuration using four workers and the MMLU dataset, observing that it achieves linear scaling and an optimal cache hit rate with minimal overhead from this weakly consistent distributed cache design.
There exists a trade-off between maximizing data locality and parallel processing efficiency. Exploring advanced scheduling policies to optimize this trade-off is designated as an area for future research.
In addition, concurrent work from Preble~\cite{srivatsa2024preble} studies data-parallel scheduling based on an early version of SGLang.

\section{Additional Details on Compressed Finite State Machine}
\label{sec:additional_fsm}

This section discusses the background and implementation details of the compressed finite state machine for faster constrained decoding. We aim for LLMs to follow a regular expression (regex), which offers greater expressiveness and can be used to represent common formats such as JSON schemas. To achieve this, we convert the regex into a Finite State Machine (FSM) to guide the generation process during decoding~\cite{willard2023efficient}. An FSM is essentially a graph with nodes (states) and edges (transitions with strings/characters). Starting from an initial state, each transition appends the string on the edge to move to the next state, using a set of final states to conclude the process. This mechanism guides an LLM's decoding, filtering out invalid tokens based on the FSM's current state transitions, as illustrated in ~\autoref{fig:fsm_demo}. The decoding might involve multiple transitions in the FSM until reaching a final state.

\begin{figure}[ht]
    \centering
    \includegraphics[width=\linewidth]{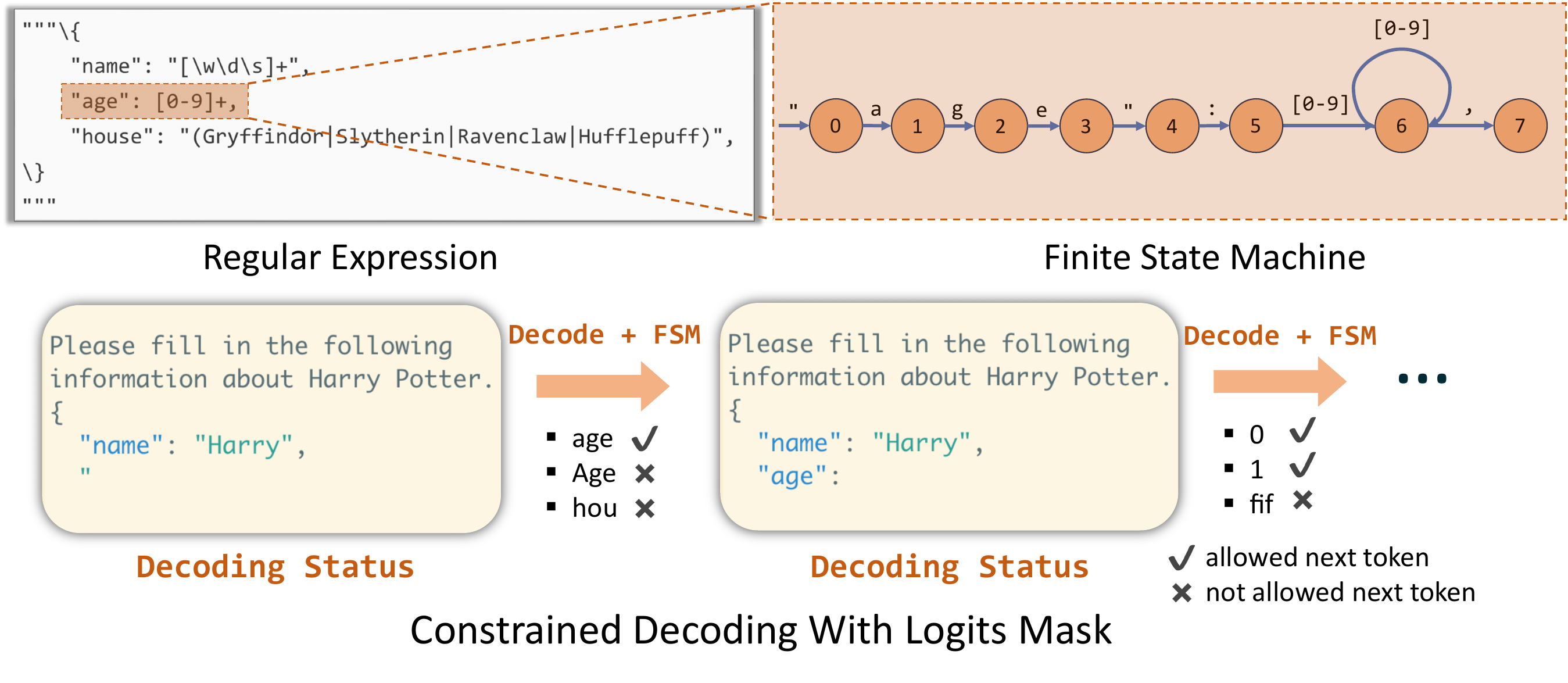}
    \vspace{-2em}
    \caption{Example of how regex is converted into FSM and how FSM guides the decoding process.}
    \vspace{-0.5em}
    \label{fig:fsm_demo}
\end{figure}

The challenge of constrained decoding arises from the fact that constraints are often expressed in natural language formats, i.e., regex is depicted by characters/strings, while LLMs are designed to interpret and process these as tokens. This creates a complex scenario since the mapping between strings and tokens is intricate and lacks a one-to-one correspondence ~\cite{kuchnik2023validating}.

This section is derived from an earlier blog post\footnote{\url{https://lmsys.org/blog/2024-02-05-compressed-fsm/}}. Readers are also encouraged to read the blog post for additional background and easier understanding.

\subsection{Implementation Details of Compressed Finite State Machine}

To simplify the construction of Compressed FSM, we build the original FSM on characters/strings instead of on tokens. We formally define the concepts of singular transition edge and compressed edge as follows:

\begin{itemize}
    \item Singular transition edge: A edge is a singular transition edge if 1) its source node has only one successor node, and 2) there is only one acceptable character/string on it.
    \item Compressed edge: An edge compressed by several consecutively adjacent edges $(e_0, e_1, \ldots, e_k)$ if and only if $e_1, \ldots, e_k$ are singular transition edges. The text of the compressed edge is the concatenation of the texts of $e_0, e_1, \ldots, e_k$.
\end{itemize}

Starting from an FSM based on characters, we recursively merge singular transition edges into their preceding ones until further compression is unfeasible, resulting in a Compressed FSM. This approach speeds up the decoding process, as demonstrated by \lang's runtime efficiency with the Compressed FSM, shown in ~\autoref{fig:compressed_fsm}.

\begin{figure}[ht]
    \centering
    \includegraphics[width=\linewidth]{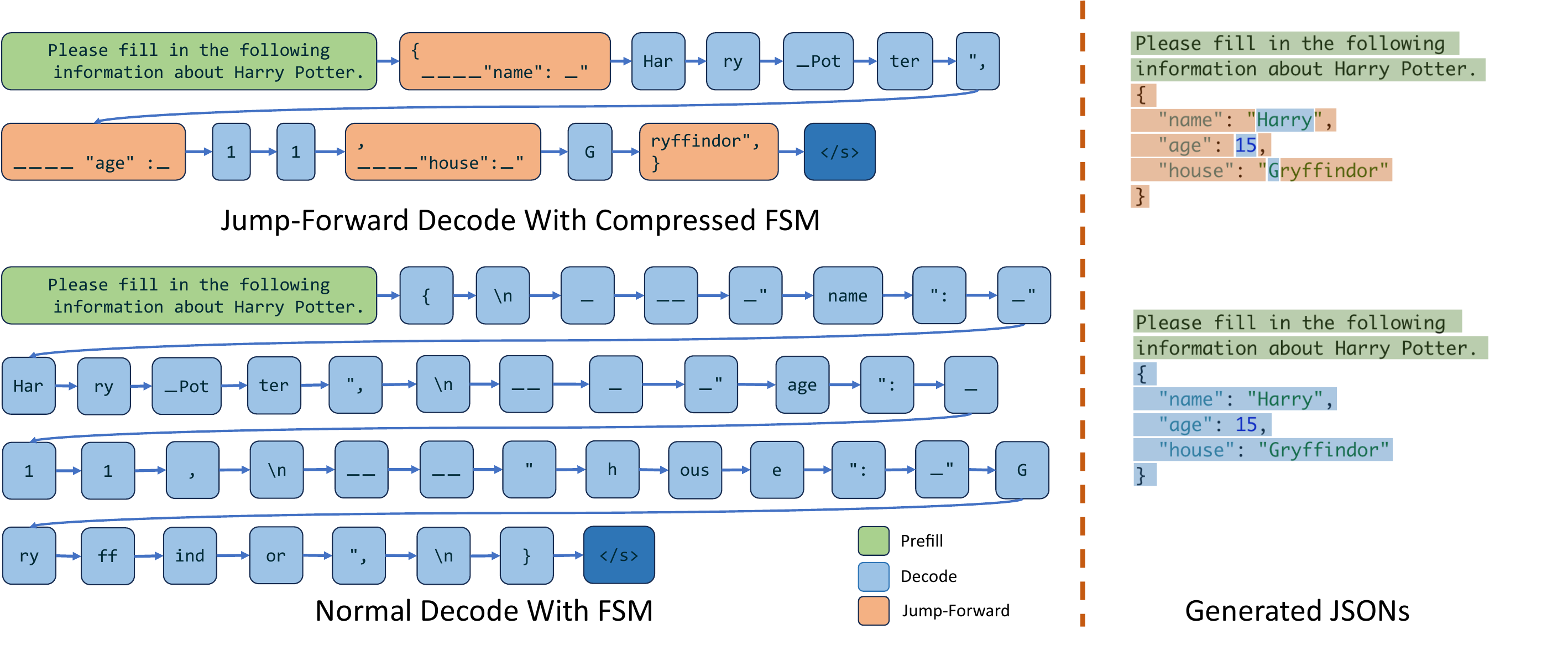}
    \vspace{-2em}
    \caption{Comparison of decoding using Compressed FSM versus normal FSM: The left subfigure depicts the decoding process per forward pass, while the right subfigure explains the origins of various result components.}
    \vspace{-1.5em}
    \label{fig:compressed_fsm}
\end{figure}

\subsection{Handling Tokenization Artifacts with Retokenization}

When a new token is generated, we get the token's string and search all the outgoing edges of the current state find the one that starts with the just decoded string, and then move forward. However, when the transition edge is a well-compressed one and contains a very long string, we may anticipate the next rounds' decoded strings as well. This is where the acceleration happens and we call this process \textit{Jump Forward}. However, we still need to convert the string into tokens for the next decoding phases and it is not straightforward due to the LLM's specific pretraining and tokenization method; direct partitioning might alter the intended meaning~\cite{Tran-Thien_2024}. For example, the compressed text in \autoref{fig:example_first_program}'s regex is \texttt{\{"summary":\ "}, which can only be tokenized as \texttt{\{"}, \texttt{summary}, \texttt{":} and \texttt{\_"} according to the tokenizer instead of partition them randomly such as \texttt{\{"}, \texttt{summa}, \texttt{ry}, and \texttt{":\_"}. To address this issue, we use the original tokenizer to retokenize all the previous text as well as the text of the compressed edge, ensuring alignment with the original input format of LLMs. And it only brings minor retokenization overhead.

\subsection{Future Extension: Addressing Distorted Probability}

The challenge caused by the gap between strings and tokens also brings the problem of skewed probability distribution~\cite{Tran-Thien_2024}. For example, in \autoref{fig:example_first_program}, the regex \texttt{"[ABCD][+-]?"} suggests grades from \texttt{A+} to \texttt{D-}, but if replaced with broader terms like \texttt{Excellent|Above Average|Fair|Below Average}, the runtime may inaccurately map an \texttt{A} to \texttt{Above Average} due to the term \texttt{Above Average} is on a compressed transition, misrepresenting the grade hierarchy. This occurs because the LLM doesn't recognize the specific range of choices, leading to inappropriate token sequences. Computing accurate probabilities for each choice requires summing the probabilities of all token sequences that result in each choice, which complicates decoding and adds overhead. One workaround is to include the choices or the regex directly in the prefill prompt, guiding the LLM to be aware of its choices and output the decision in proper token sequences. However, this approach doesn't solve the underlying issue of distorted probabilities, highlighting the need for further research to improve the compressed FSM's accuracy.

\section{Additional Experimental Setups and Results}
\label{sec:additional_experimental_results}

\textbf{Additional experimental setups.}
\autoref{fig:exp_llama_7b_throughput} and \autoref{fig:exp_llama_7b_latency} are obtained by running Llama-7B on a single A10G (24GB) GPU.
\autoref{fig:exp_mixtral_8x7b_throughput} are obtained by running Mixtral-8x7B on 8 A10G (24GB) GPUs with tensor parallelism.
\autoref{fig:exp_hit_rate_vs_perf}(c) is obtained by running Llama-7B on a single A10G (24GB) GPU.
\autoref{fig:exp_llama_70b_throughput} are obtained by running Llama-70B on 4 A100G (80GB) GPUs with tensor parallelism.
\autoref{tab:e2e_multi_modal} are obtained by running LLaVA-v1.5-7B on a single A10G (24GB) GPU and running LLaVA-Next-34B on a single A100G (80GB) GPU.
Each bar in the benchmark figures takes several minutes to an hour to run.

\textbf{Additional experimental results.}
\autoref{fig:exp_cache_hit_rate} shows the achieved and optimal cache hit rates on the benchmarks listed in \autoref{fig:exp_llama_7b_throughput}.
\autoref{fig:exp_llama_70b_throughput} shows the throughput on Llama-2-70B with tensor parallelism.

\begin{figure}[t]
\centering
\includegraphics[width=\linewidth]{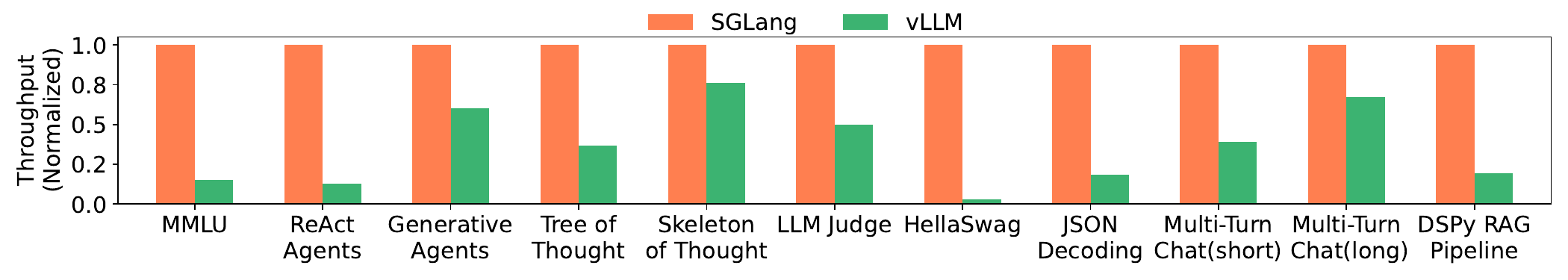}
\vspace{-2em}
\caption{Normalized throughput on Llama-2-70B models with tensor parallelism. Higher is better.}
\label{fig:exp_llama_70b_throughput}
\end{figure}

\begin{figure}[t]
\centering
\includegraphics[width=\linewidth]{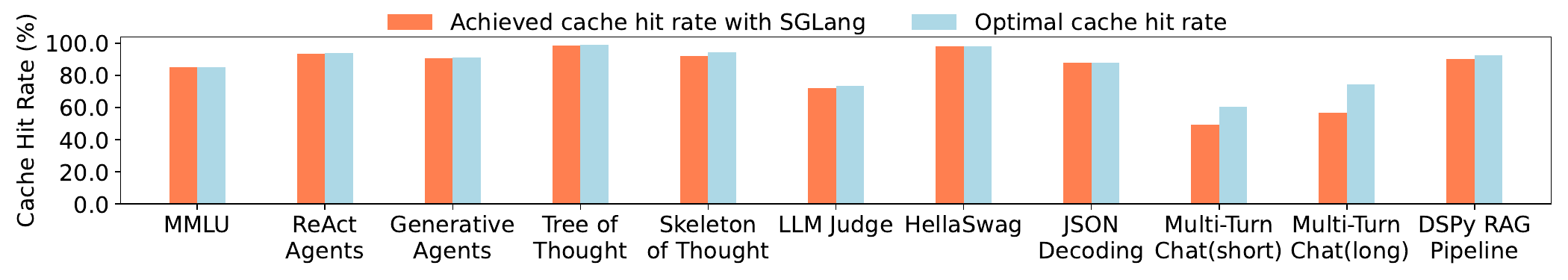}
\vspace{-2em}
\caption{Achieved cache hit rate and optimal cache hit rate on various benchmarks.}
\label{fig:exp_cache_hit_rate}
\end{figure}

\begin{figure}[t]
    \centering
    \begin{subfigure}{0.47\textwidth}
        \includegraphics[width=\linewidth]{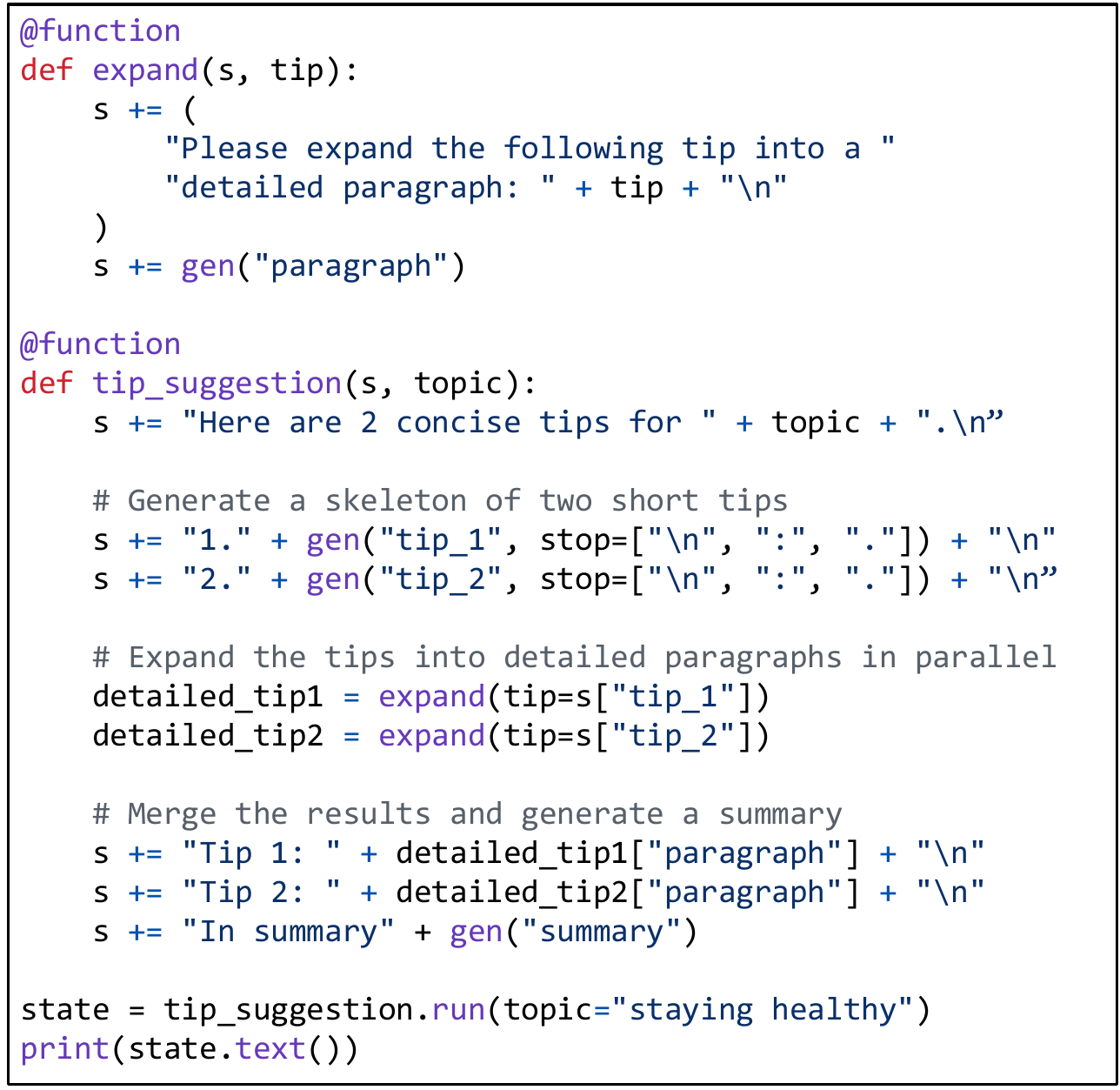}
        \caption{The \lang program for parallel tip suggestion with skeleton-of-thought prompting.}
        \label{fig:example_tip_suggestion}
    \end{subfigure}
    \hfill
    \begin{subfigure}{0.47\textwidth}
        \includegraphics[width=\linewidth]{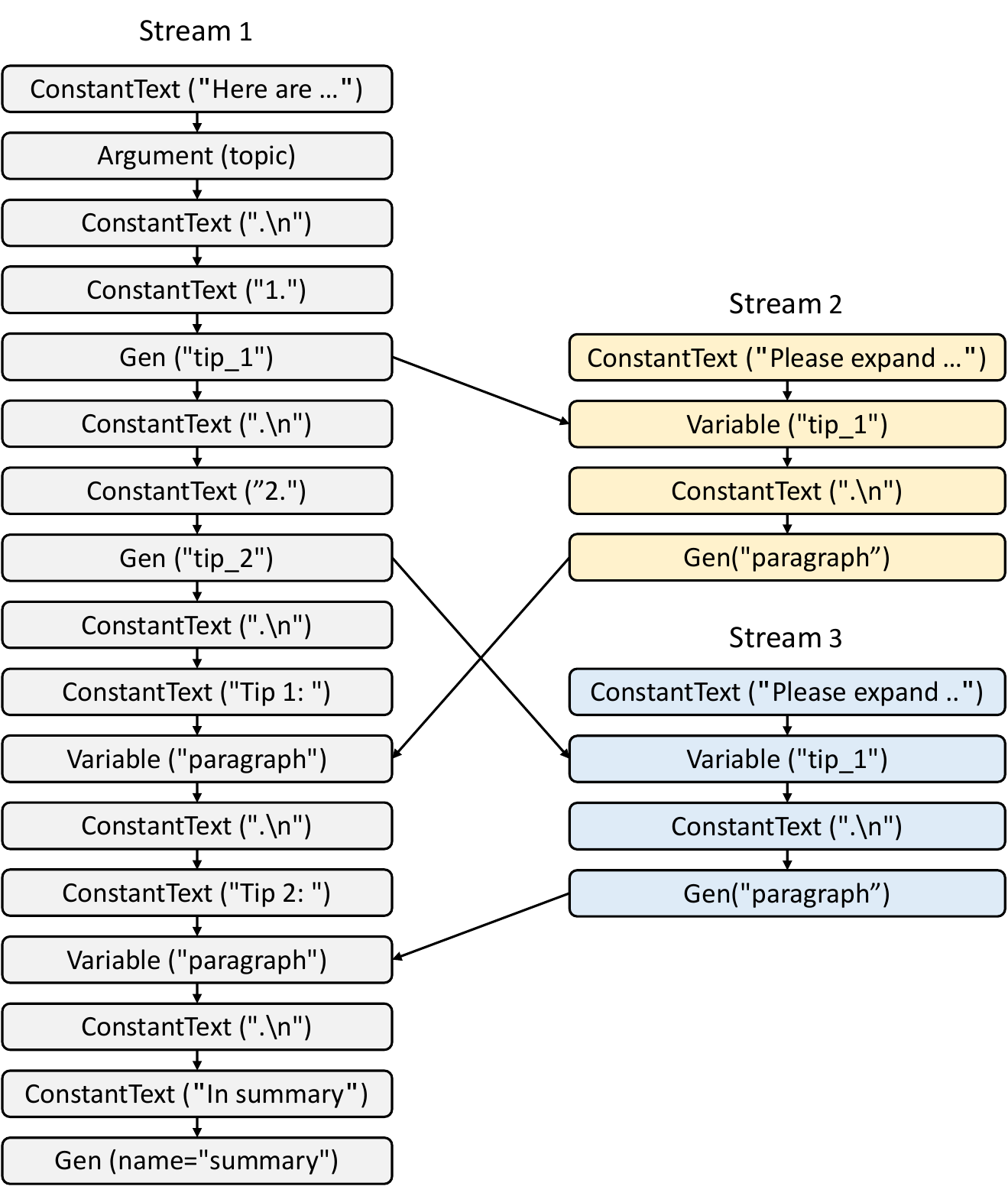}
        \caption{A computational graph for the program in \autoref{fig:example_tip_suggestion}. The three streams correspond to three function calls.}
        \label{fig:example_dataflow_graph}
    \end{subfigure}
    \caption{An \lang program and its corresponding dataflow graph.}
\end{figure}

\section{Compiler Mode}
\label{sec:compiler_mode}
Besides the interpreter mode used in the main body of the paper, another way to run \lang programs is to compile them as computational graphs and execute them with a graph executor. This opens up opportunities for more compilation optimizations, as we can rewrite the graph and perform more static planning.

\subsection{Design and Implementation}

We designed an intermediate representation (IR) for \lang, which represents \lang program structures and operations as a computational graph. This graph includes nodes for primitive operators and edges for dependencies. See \autoref{fig:example_dataflow_graph} for the graph corresponding to the program in \autoref{fig:example_tip_suggestion}. In the program, each call to a decorated function or fork creates a new prompt state or a stream.

There are several types of nodes. Each operand of the operators \texttt{+=} and \texttt{+} in a \lang program is represented by an IR node. These include \texttt{ConstantText}, \texttt{Argument}, \texttt{Gen}, \texttt{Select}, \texttt{Variable}, \texttt{Fork}, \texttt{GetForkItem}, and \texttt{Join}. There are two types of dependencies: intra-stream dependency, where operations submitted into a stream using \texttt{+=} must be executed after all preceding operations in that stream, and inter-stream dependency, which occurs when one stream needs to fetch variable values from another stream, necessitating synchronization. Operations like fork manipulate multiple streams and thus introduce inter-stream dependencies.

To generate the graph, we use tracing to run the program with abstract arguments and construct the graph dynamically. This method is limited to programs without data-dependent control flow, a limitation we plan to address in future work. Once constructed, the graph can be executed through a graph executor, eliminating the need to reinterpret the original Python program. This results in benefits like graph rewriting optimizations, reduced runtime overhead, and program serialization. For execution, stream executors are launched for each data stream, dispatching IR nodes to the streams in topological order.

\subsection{A Case Study of Compiler Optimization: Code Movement for Improving Prefix Sharing}
We explore a compilation optimization for \lang IR: code movement for improving prefix sharing. We anticipate that more classical compiler techniques can also be applied, such as auto-tuning and instruction selection.

This optimization aims to improve prefix sharing by reordering nodes in the graph to increase the length of the constant prefix. It does not strictly preserve the original computation, classifying it as an aggressive optimization. For instance, changing the prompt from ``Here is a question + \{question\}. Please act as a math expert and solve the given question.'' to ``Please act as a math expert and solve the given question. Here is a question + \{question\}.'' results in a longer shareable prefix. This optimization is interesting because traditional program analysis cannot achieve it due to the presence of natural language instructions in \lang. Instead, we prompt GPT-4 to reorder graph nodes. We write a prompt with several examples to teach GPT-4 the concepts of \lang IR, and we find that GPT-4 can successfully apply this optimization for some simple \lang programs.

We evaluate the effectiveness of this optimization. We collect 20 prompt templates from the internet and implement them in \lang. We utilize 5 of these templates as few-shot training examples and the remaining 15 as test cases. Our evaluation shows that, for 12 out of the 15 templates, GPT-4 successfully reorders the graph nodes without altering the semantics, as confirmed by manual inspection of the modified prompts. On average, this optimization results in a 60-token increase in the shareable prefix length, showcasing GPT-4's effectiveness. Failures in creating optimized prompt order come from an incorrect understanding of the semantic meaning behind the graph nodes. It is too aggressive and puts all constants upfront even when such ordering changes the original semantics. This case study aims to explore the use of GPT-4 for compiler optimizations. More work is needed to make these kinds of optimizations reliable in the future.

\end{document}